\documentclass{article}

 \usepackage[preprint]{neurips_2026}

\usepackage{graphicx}
\usepackage[utf8]{inputenc} 
\usepackage[T1]{fontenc}    
\usepackage{hyperref}       
\usepackage{url}            
\usepackage{multirow}
\usepackage{booktabs}       
\usepackage{amsfonts}       
\usepackage{nicefrac}       
\usepackage{microtype}      
\usepackage{xcolor}         
\usepackage{subcaption}
\usepackage{amsmath,amsthm,amssymb,amsfonts}
\usepackage{bm}
\usepackage{algorithm}
\usepackage{algorithmic}
\usepackage{wrapfig}
\newtheorem{theorem}{Theorem}
\newtheorem{definition}{Definition}
\newtheorem{lemma}{Lemma}
\newtheorem{assumption}{Assumption}

\newtheorem{remark}{Remark}

\title{Bayesian Optimization with Structured Measurements: A Vector-Valued RKHS Framework}

\author{%
  Wenbin Wang\thanks{Correspondence to \texttt{wenbin.wang@epfl.ch}.} \\
  Automatic Control Laboratory, EPFL\\
  \And
  Colin N. Jones \\
  Automatic Control Laboratory, EPFL\\
}

\begin{document}

\maketitle

\begin{abstract}

Bayesian optimization (BO) is an efficient framework for optimizing expensive black-box functions. However, it is typically formulated as learning an end-to-end mapping from inputs to scalar objectives, thereby discarding the potentially rich information whenever a structured system output is available. In this work, we study Bayesian optimization over a vector-valued operator with structured measurements,  where each measurement observes multidimensional or functional outputs, e.g., trajectories or spatial fields, rather than a single scalar value. The objective is then defined as a linear functional of these measurements. This allows each observation to reveal substantially richer information about the underlying system compared to scalar observations. Assuming the unknown operator lies in a vector-valued reproducing kernel Hilbert space (RKHS), we derive high-probability concentration bounds for the kernel ridge regression (KRR) estimator directly in the measurement space, characterizing uncertainty in a general Hilbert space. Building on these results, we propose an algorithm based on the upper confidence bound (UCB) acquisition function with regret guarantees under mild assumptions, recovering sublinear rates for common kernels. Empirically, we demonstrate that leveraging structured measurements leads to improved sample efficiency by enabling efficient transfer of information across objectives and adaptation to time-varying settings.
\end{abstract}
\section{Introduction}



Bayesian optimization (BO) is an efficient sequential strategy for optimizing expensive black-box functions \cite{srinivas2012information,chowdhury2017kernelized,frazier2018tutorial,garnett2023bayesian}. It is commonly formulated as learning an end-to-end mapping from inputs to a scalar-valued objective using a Gaussian process surrogate model \cite{chowdhury2017kernelized, srinivas2012information}. However, in many real-world problems, this scalar objective is not directly generated by the system, but instead arises as a functional of an underlying structured output, such as a trajectory, field, or time series. For example, in building operation, the cost is defined by integrating heating prices over realized heating power trajectories \cite{shi2026disturbance,wang2025personalized}. Standard BO treats the problem as learning a direct mapping from inputs to scalar objective, effectively discarding the intermediate trajectory \cite{yu2025price,XU2024122493,menn2026preferential,hose2025fine,brunzema2022controller}. This limitation becomes particularly pronounced when multiple objectives correspond to different functionals of the same trajectory (e.g., different price signals), where existing methods must relearn each objective separately instead of sharing information through the common structure. While algorithms such as multi-task Bayesian optimization \cite{chowdhury2021no,evgeniou2004regularized,coutinho2026efficient,swersky2013multi} partially address this issue by modeling correlations across objectives, they typically rely on a finite number of scalar observations and a careful design of the correlation matrix. Another line of work models such correlations in the input space, such as contextual Bayesian optimization \cite{krause2011contextual,xu2024data,wang2025personalized}, which can be sample-inefficient for the exploration across different contexts. When structured outputs are available for measurement, each observation can reveal more information about the underlying system \cite{Yuan_2010}. This motivates leveraging such measurements in algorithm design, which can potentially improve learning efficiency.

To account for structured measurements, it is natural to consider Bayesian optimization with vector-valued outputs \cite{kadri2016operator,carmeli2010vector,micchelli2005learning}, where the black-box operator maps each input to outputs in a Hilbert space. This formulation enables modeling the underlying system through vector-valued observations. However, existing approaches remain limited. Most work focuses on finite-dimensional Euclidean output spaces \cite{Kudva_2026,chowdhury2021no,swersky2013multi,sessa2023multitask}, whereas many practical problems involve vector-valued trajectories and functions. Another line of work considers infinite-dimensional outputs but assumes that measurements are available only through linear functionals \cite{brault2019infinite,huang2026function,lahr2026optimal}, effectively reducing each measurement to a scalar value. As a result, these methods fail to exploit the full structure of the underlying output and limit the amount of information obtained from each observation.

In this work, we study Bayesian optimization for black-box systems that produce structured, vector-valued outputs. We introduce a general framework in which the unknown operator lies in a vector-valued reproducing kernel Hilbert space (RKHS) and is observed through a known linear measurement operator, possibly mapping into a different Hilbert space. The objective is then defined as a linear functional of these measurements. This formulation unifies a broad class of existing problem settings \cite{brault2019infinite,huang2026function,chowdhury2021no,vien2018bayesian,lahr2026optimal,Kudva_2026,swersky2013multi,sessa2023multitask} and enables structured measurements to be incorporated in a principled manner. Under mild assumptions, we establish high-probability concentration bounds for the vector-valued kernel ridge regression (KRR) estimator under general measurement operators. From the derived uncertainty bound, we propose an algorithm based on the upper confidence bound (UCB) of the estimator and prove sublinear regret guarantees for commonly used kernels. Numerical experiments demonstrate that structured measurements significantly improve sample efficiency, permitting information transfer across objectives and adaptation in time-varying settings. 

\textbf{Contributions.} The contributions of this work are as follows:
\begin{enumerate}
    \item \textbf{Bayesian optimization with structured measurements.}
    We introduce a general formulation of Bayesian optimization in which measurements are obtained through linear measurement operators acting on a latent nonlinear vector-valued operator. This enables richer information to be incorporated at each observation and naturally provides the potential for improving learning efficiency.

    \item \textbf{Theoretical guarantees under general measurement operators.}
    We establish high-probability concentration bounds for vector-valued KRR under arbitrary bounded linear measurement operators in a general Hilbert space. Based on these results, we propose a UCB-type algorithm and prove sublinear regret guarantees, extending classical scalar-valued BO with structured observations.

    \item \textbf{Improved efficiency in real-world applications.}
    Through synthetic benchmarks and real-world experiments, we demonstrate that the proposed method significantly improves sample efficiency by facilitating efficient information transfer across different objectives and achieving superior performance in optimizing time-varying objectives.
\end{enumerate}

\subsection{Related work}

\textbf{Learning nonlinear vector-valued operators.} Learning vector-valued or function-valued operators has been widely studied in the functional data analysis (FDA) literature \cite{gertheiss2024functional,ramsay1997functional}. Classical approaches typically rely on discretization, representing the outputs in finite-dimensional Euclidean spaces \cite{kadri2010nonlinear,kadri2009general,li2020neural}. Some recent work models functions directly in Hilbert spaces \cite{brault2019infinite,kadri2016operator}, using tools such as multi-output Gaussian processes \cite{bonilla2007multi,ruiz2024survey} and neural operators \cite{kovachki2023neural}. While these methods enable flexible modeling of vector-valued operators, they primarily focus on supervised learning and do not address sequential optimization problems. \textbf{BO with vector-valued outputs.}
Several works extend Bayesian optimization to structured or multi-output settings. Multi-task Bayesian optimization \cite{chowdhury2021no,sessa2023multitask,swersky2013multi} leverages correlations across multiple outputs to improve sample efficiency, while multi-objective Bayesian optimization \cite{khan2002multi,daulton2022multi} focuses on learning Pareto-optimal trade-offs. However, these approaches typically assume a finite number of scalar outputs and do not consider measurements such as functions or trajectories. \textbf{BO with augmented inputs.}
A related line of work assumes that observations are obtained through linear functionals of the vector-valued output \cite{huang2026function,lahr2026optimal}. They view the linear functional as the augmented input of a scalar-valued function with an augmented input space, which reduces the problem to a scalar-valued setting. Consequently, this formulation cannot quantify the uncertainty of the vector-valued operator. A comprehensive analysis of uncertainty quantification under general linear measurement operators is still missing.

\section{Problem formulation}
\label{section_2}

\paragraph{Problem formulation.} We consider the problem of optimizing a black-box system in which, for each input, the objective is defined as a linear functional of structured measurements of an underlying vector-valued operator. Specifically, we consider
\begin{equation}
\label{eq_obj_linearfunctional}
\max_{x \in \mathcal{X}} \quad F(x) = \langle m, M f(x) \rangle_{\mathcal{M}},
\end{equation}
where $f : \mathcal{X} \to \mathcal{Y}$ is an unknown vector-valued operator, $\mathcal{Y}$ is the output Hilbert space. The operator $M : \mathcal{Y} \to \mathcal{M}$ is a known bounded linear measurement operator, where $\mathcal{M}$ is the Hilbert space of measurements, and $m \in \mathcal{M}$ defines a linear functional that evaluates the measured output. For each input $x$, the output $f(x)$ is not directly observable; instead, only its measurement $M f(x)$ is available.

\begin{wrapfigure}{r}{0.5\textwidth}
    \centering
    \includegraphics[width=\linewidth]{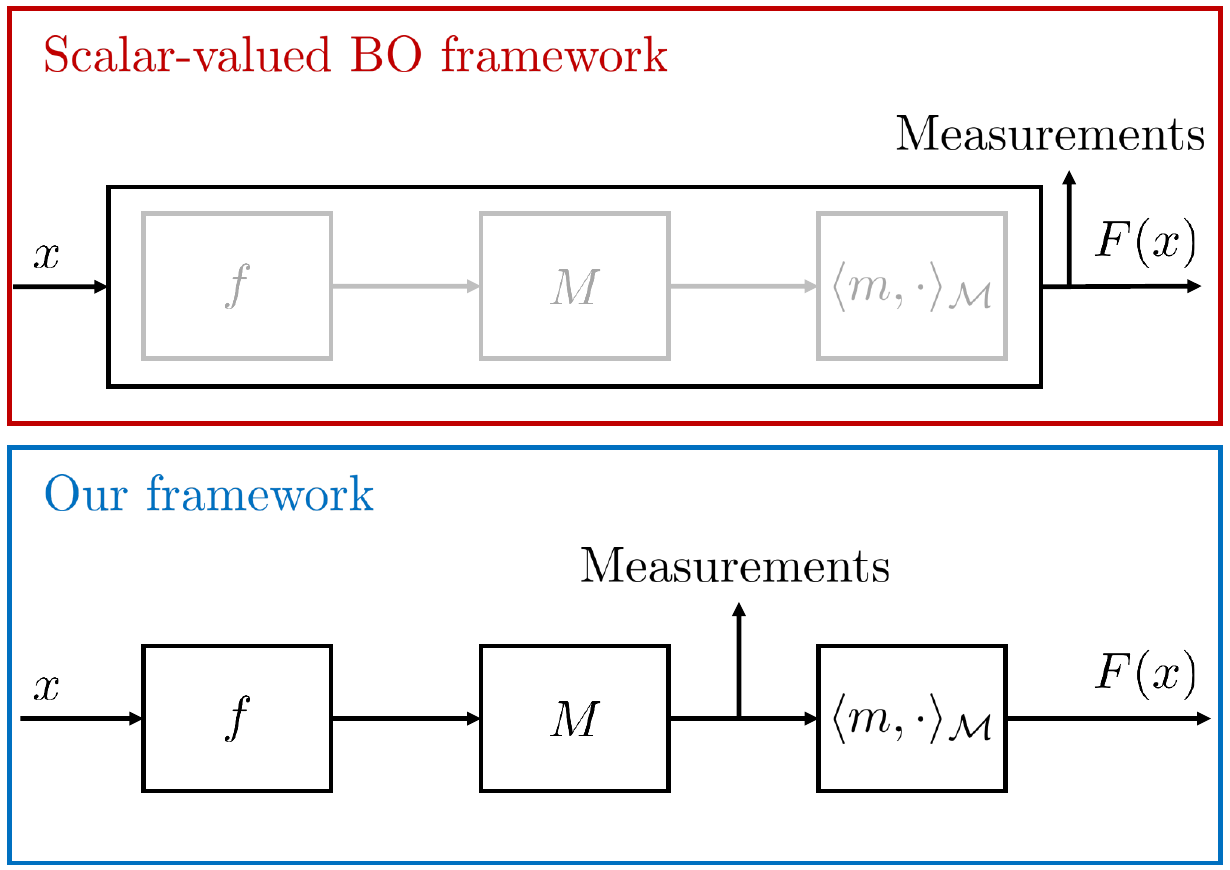}
    \caption{Vector-valued BO framework. Red box: classical BO framework. Blue box: vector-valued BO framework introduced in this work.}
    \label{fig:wrap}
\end{wrapfigure}
\textbf{Practical implication.} This formulation generalizes different observation regimes, ranging from full observation ($M=I$) to scalar observation ($M$ is a linear functional). The inner product $\langle m, M f(x) \rangle_{\mathcal{M}}$ naturally arises in many applications where the objective depends on vector-valued outputs linearly, e.g., energy consumption computed from trajectories in building control~\cite{wang2025personalized,shi2026disturbance}. It captures a broad class of common operations such as point evaluations, integrals, and averaging. Examples of common linear functionals in different Hilbert spaces are provided in Appendix~\ref{appendix_linear_functionals}. 

\textbf{Comparison to scalar-valued BO.}
As illustrated in Fig.~\ref{fig:wrap}, scalar-valued BO models an end-to-end mapping from $x$ to $F(x)$ using scalar feedback, thereby discarding the underlying structure in $f(x)$. This limits the ability to exploit richer observations and prevents information transfer across different objectives. In contrast, our formulation explicitly incorporates structured measurements $Mf(x)$, enabling more informative learning and adaptation across objectives.

\textbf{Comparison to BO with augmented input.} Observe that for any $m$ and $M$, $\langle m, M f(x) \rangle_{\mathcal{M}} 
= \langle M^* m, f(x) \rangle_{\mathcal{Y}}$. Thus, the objective can be viewed as applying a linear functional $M^* m \in \mathcal{Y}$ to $f(x)$. A common approach is to model this quantity using a scalar-valued function $\tilde{F}(x,y) \mapsto \langle y, f(x) \rangle_{\mathcal{Y}}$ with an augmented input space $\mathcal{X} \times \mathcal{Y}$, which is learned via a scalar kernel~\cite{lahr2026optimal,huang2026function}. While this enables standard scalar-valued analysis, it reduces the
observation to a scalar projection of the output. In contrast, our formulation provides flexibility. Different choices of $M$ correspond to different observation mechanisms, which can significantly affect the amount of information obtained on $f(x)$.

\paragraph{Performance metric.}
Since $f$ is a black-box operator, solving \eqref{eq_obj_linearfunctional} in a single step is generally intractable. Instead, we aim to construct a sequence of inputs $\{x_t\}_{t=1}^T$ under a finite budget $T$ such that the resulting averaged performance approximates the optimal solution of \eqref{eq_obj_linearfunctional}. Specifically, at each iteration $t$, the learner queries $f$ at a chosen point $x_t$ and observes noisy measurements $y_t = M f(x_t) + \delta_t$, where $\delta_t \in \mathcal{M}$ is additive noise. The goal is to design a policy that balances exploration and exploitation, which can be quantified via the cumulative regret
\[
\mathcal{R}_T = \sum_{t=1}^T \langle m, Mf(x^*) - Mf(x_t) \rangle_{\mathcal{M}},
\]
where $x^*$ denotes an optimal solution to \eqref{eq_obj_linearfunctional}. Sublinear growth of $\mathcal{R}_T$ with respect to $T$ indicates the problem is effectively solved \cite{srinivas2012information,chowdhury2017kernelized}. In general, such results are not attainable for arbitrary operators and noise processes. To obtain meaningful results, we impose the following assumptions.
\begin{assumption}[Observation model]
\label{assumption_noise}
    Conditioned on the natural filtration $\mathcal{F}_{t-1}$, $\delta_t\in \mathcal{M}$ is a zero-mean sub-Gaussian random element, i.e., $\exists \sigma>0$, $\forall \alpha \in\mathcal{M}$, $\mathbb{E}[e^{\langle \alpha,\delta_t\rangle_{\mathcal{M}}}|\mathcal{F}_{t-1}]\leq e^{\sigma^2\|\alpha\|_{\mathcal{M}}^2/2}$.
\end{assumption}
\begin{remark}
The sub-Gaussian noise assumption is commonly adopted in the literature \cite{chowdhury2021no,huang2026function}. It ensures that the noise has controlled (light-tailed) behavior. Depending on the measurement space $\mathcal{M}$, the noise $\delta$ can take values in an infinite-dimensional Hilbert space. The sub-Gaussian property implies that the noise distribution cannot spread uniformly across infinitely many directions. Instead, its magnitude must decay sufficiently fast along higher-order directions.
\end{remark}

\paragraph{Vector-valued RKHS.} Vector-valued RKHS is a direct generalization of scalar-valued RKHS \cite{micchelli2005learning} and contains the operators with outputs in Hilbert space $\mathcal{Y}$. We adopt its definition from \cite{micchelli2005learning}.
\begin{definition}
    The Hilbert space $\mathcal{H}$ is a reproducing kernel Hilbert space if for any $y\in\mathcal{Y}$ and $x\in \mathcal{X}$, the linear functional that maps $f\in\mathcal{H}$ to $\langle y,f(x)\rangle_{\mathcal{Y}}$ is continuous.
\end{definition}
According to the Riesz representation theorem~\cite{akhiezer1981theory}, we can define a mapping $\Phi:\mathcal{X}\rightarrow\mathcal{L}(\mathcal{Y},\mathcal{H})$, where $\mathcal{L}(\mathcal{Y},\mathcal{H})$ is the Banach space of bounded linear operator, such that for all $y\in \mathcal{Y}$, $\langle y,f(x)\rangle_{\mathcal{Y}} = \langle \Phi(x)y,f\rangle_{\mathcal{H}}$. Consequently, a mapping $K:\mathcal{X}\times\mathcal{X}\rightarrow\mathcal{L}(\mathcal{Y})$ defined by $K(x,s)y = (\Phi(x)y)(s)$ is called a vector-valued kernel. Further properties of such a kernel can be found in \cite{caponnetto2008universal,carmeli2010vector}.



\begin{assumption}[Regularity]
    \label{assumption_regularity}
    The unknown operator $f$ belongs to the vector-valued RKHS $\mathcal{H}$, induced by the operator-valued kernel $K$. The RKHS norm of $f$ is bounded as $\|f\|_{\mathcal{H}} \leq \Gamma$. Furthermore, we assume that the kernel is uniformly bounded, i.e.,
    \[
    \sup_{x \in \mathcal{X}} \|K(x,x)\|_{\mathrm{op}} \leq C_K.
    \]
    We further assume that given a pairwise distinct input set $X_t:=\{x_i\}_{i = 1}^t$, any operator-valued matrix $K_{X_tX_t}$ with its elements given by $(K_{X_tX_t})_{i,j} = K(x_i,x_j)$ is a trace-class operator, i.e. the sum of eigenvalues (point spectrum) of $K_{X_tX_t}$ is finite.
\end{assumption}

\begin{remark}
The boundedness of the kernel and the RKHS norm ensures that the target operators lie in a well-structured hypothesis space, enabling efficient uncertainty reduction through sequential sampling. The trace-class assumption on the operator-valued kernel is essential for deriving theoretical guarantees, as it ensures that the information gained from the samples is finite. This condition is mild and satisfied by a broad class of kernels, such as separable kernels \cite{huang2026function,chowdhury2021no}.
\end{remark}


\paragraph{Frequentist perspective} In this work, we adopt a frequentist perspective and assume that the unknown operator is fixed with bounded norm. The only source of randomness arises from the observation noise. Accordingly, we analyze worst-case regret over all possible operators, whereas the Bayesian formulation considers expected regret under a prior distribution.

\textbf{Extension to multiple and time-varying objectives.}
While we focus on the formulation in \eqref{eq_obj_linearfunctional} with a fixed objective for clarity of analysis, the proposed framework naturally extends to settings where the objective varies across iterations, e.g., $F_t(x) = \langle m_t, M f(x) \rangle$. In particular, by quantifying the structured measurement $Mf(x)$, the resulting confidence sets can be transferred across different objectives defined by different linear functionals. This enables efficient adaptation to multiple objectives and time-varying settings. We defer a detailed discussion to Appendix \ref{appendix_extension}.
\section{Confidence bound in measurement space}
\paragraph{Vector-valued KRR.}
Let $X_t := \{x_i\}_{i=1}^t$ be a set of pairwise distinct inputs and $Y_t := (y_1,\dots,y_t)\in \mathcal{M}^t$ the corresponding noisy observations, where $y_i = M f(x_i) + \delta_i$, and $\mathcal{M}^t$ is the direct sum of $t$ copies of $\mathcal{M}$. An estimator $\mu_t\in\mathcal{H}$ of $f$ is obtained by solving the regularized risk minimization problem
\begin{equation}
\label{eq_KRR}
\mu_t 
= \arg\min_{\tilde{f} \in \mathcal{H}} 
\sum_{i=1}^{t} \|M\tilde{f}(x_i) - y_i\|_{\mathcal{M}}^2 
+ \lambda \|\tilde{f}\|_{\mathcal{H}}^2,
\end{equation}
where $\lambda > 0$ is a regularization parameter controlling the complexity of the estimator. The explicit formulation of $\mu_t$ is provided in Appendix~\ref{appendix_lemma_representer_proof}. To analyze the prediction error $\|Mf(x) - M\mu_t(x)\|_{\mathcal{M}}$, a natural approach is to first bound the error in the output space $\mathcal{Y}$ and then propagate it through the measurement operator $M$. However, this can be overly conservative, as it requires controlling errors in all directions of $\mathcal{Y}$, including those that are not observable through $M$. This observation motivates modeling the measurable quantity $Mf(x)$ directly in the measurement space.

\paragraph{Induced kernel in the measurement space.}
We introduce the induced kernel $K^{M} : \mathcal{X}\times\mathcal{X} \rightarrow \mathcal{L}(\mathcal{M})$ defined by
\[
K^{M}(x,s) = M K(x,s) M^*,
\]
which is an operator-valued kernel taking values in $\mathcal{L}(\mathcal{M})$. The following lemma shows that $K^M$ is a reproducing kernel in the measurement space.

\begin{lemma}[Proposition 6, \cite{carmeli2010vector}]
\label{lemma_m}
Let $M \in \mathcal{L}(\mathcal{Y},\mathcal{M})$ and $K^{M}(x,s) = M K(x,s) M^*$. Then $K^{M}$ is a reproducing kernel, and the corresponding RKHS $\mathcal{H}_{M}$ is continuously embedded in $\mathcal{H}$.
\end{lemma}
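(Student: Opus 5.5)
The plan is to prove both claims through one explicit construction rather than through abstract positivity alone. The central object is the linear map $A:\mathcal{H}\to\mathcal{M}^{\mathcal{X}}$ given by $(Af)(x)=Mf(x)$. We will show that its range, equipped with the quotient norm, is an RKHS whose reproducing kernel is $K^M$. The "continuous embedding" is then the statement that $A$ is a bounded (in fact coisometric) map from $\mathcal{H}$ onto $\mathcal{H}_M$, acting isometrically on the orthogonal complement of its kernel. The main point needing care is this interpretation. Elements of $\mathcal{H}$ take values in $\mathcal{Y}$, while elements of $\mathcal{H}_M$ take values in $\mathcal{M}$. The embedding is therefore the isometric identification of $\mathcal{H}_M$ with the closed subspace $(\ker A)^\perp\subseteq\mathcal{H}$ via $A$, in the sense of Proposition 6 of \cite{carmeli2010vector}.

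\textbf{Step 1 (positive-definiteness).} First we check that $K^M$ is a valid operator-valued kernel, as a sanity check. Hermitian symmetry follows from $K(x,s)^*=K(s,x)$:
\[
K^M(x,s)^*=MK(x,s)^*M^*=K^M(s,x).
\]
For positivity, take any $x_1,\dots,x_n$ and $m_1,\dots,m_n\in\mathcal{M}$. Then
\[
\sum_{i,j}\langle K^M(x_i,x_j)m_j,m_i\rangle_{\mathcal{M}}=\sum_{i,j}\langle K(x_i,x_j)M^*m_j,M^*m_i\rangle_{\mathcal{Y}}\ge 0,
\]
by positivity of $K$ applied to the vectors $M^*m_i\in\mathcal{Y}$. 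By the vector-valued Moore--Aronszajn theorem, a unique RKHS with kernel $K^M$ therefore exists. It remains to identify this space concretely.

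\textbf{Step 2 (construction of $\mathcal{H}_M$).} Let $N=\ker A$. The set $N$ is closed, because for every $m\in\mathcal{M}$ the map $f\mapsto\langle m,Mf(x)\rangle_{\mathcal{M}}=\langle \Phi(x)M^*m,f\rangle_{\mathcal{H}}$ is continuous. Define $\mathcal{H}_M=A(\mathcal{H})$ with norm
\[
\|g\|_{\mathcal{H}_M}=\inf\{\|f\|_{\mathcal{H}}:Af=g\}=\|P_{N^\perp}f\|_{\mathcal{H}}.
\]
With this norm, $A|_{N^\perp}$ is a unitary map onto $\mathcal{H}_M$, so $\mathcal{H}_M$ is a Hilbert space. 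Moreover $\|Af\|_{\mathcal{H}_M}\le\|f\|_{\mathcal{H}}$, which is the continuity claim.

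\textbf{Step 3 (reproducing property).} Fix $x\in\mathcal{X}$ and $m\in\mathcal{M}$. We claim that $\Phi(x)M^*m\in N^\perp$: for every $f\in N$,
\[
\langle\Phi(x)M^*m,f\rangle_{\mathcal{H}}=\langle m,Mf(x)\rangle_{\mathcal{M}}=0.
\]
Next we compute its image under $A$:
\[
A(\Phi(x)M^*m)(s)=MK(s,x)M^*m=K^M(s,x)m,
\]
so $K^M(\cdot,x)m\in\mathcal{H}_M$. Now let $g=Af$ with $f\in N^\perp$. Using the isometry from Step 2,
\[
\langle g,K^M(\cdot,x)m\rangle_{\mathcal{H}_M}=\langle f,\Phi(x)M^*m\rangle_{\mathcal{H}}=\langle Mf(x),m\rangle_{\mathcal{M}}=\langle g(x),m\rangle_{\mathcal{M}}.
\]
This is exactly the reproducing property, so $K^M$ is the reproducing kernel of $\mathcal{H}_M$. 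Uniqueness from Step 1 then shows this is the RKHS associated with $K^M$.
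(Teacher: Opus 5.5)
Your proof is correct, but it does not follow the paper, which gives no argument of its own and simply imports Proposition~6 of \cite{carmeli2010vector}. What you wrote is a self-contained instance of the feature-map characterization of operator-valued RKHSs. With $\gamma_x:=\Phi(x)M^*\in\mathcal{L}(\mathcal{M},\mathcal{H})$ one has $\gamma_x^*\gamma_s=MK(x,s)M^*=K^M(x,s)$ in your convention. Your Steps~2--3 then show that $u\mapsto\gamma_{(\cdot)}^*u=Mu(\cdot)$ is a partial isometry from $\mathcal{H}$ onto the RKHS of $K^M$, with null space $N$.

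Step~1 and the appeal to Moore--Aronszajn are redundant. Step~3 already exhibits a Hilbert space with reproducing kernel $K^M$, and uniqueness follows because the sections $K^M(\cdot,x)m$ span a dense subspace whose inner products are fixed by $K^M$.

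The citation buys brevity; your route buys precision. It gives the literally ill-typed phrase ``$\mathcal{H}_M$ is continuously embedded in $\mathcal{H}$'' a concrete meaning, namely the isometry $(A|_{N^\perp})^{-1}:\mathcal{H}_M\to\mathcal{H}$. More usefully, it produces the contraction $\|Mf\|_{\mathcal{H}_M}\le\|f\|_{\mathcal{H}}\le\Gamma$. It is this inequality, not an embedding of $\mathcal{H}_M$ into $\mathcal{H}$, that Appendix~\ref{appendix_theorem_partial_proof} needs when it reruns the $M=I$ argument for $Mf$ with the same constant $\Gamma$. The paper leaves that step implicit.
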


Lemma~\ref{lemma_m} shows that composing elements of a vector-valued RKHS with a bounded linear operator yields a new vector-valued RKHS with kernel $K^{M}$.  Consequently, the induced estimator in the measurement space can equivalently be obtained by performing KRR in $\mathcal{H}_M$ using the observations $Y_t$ \cite{kadri2016operator,micchelli2005learning}. Specifically, for any $x\in\mathcal{X}$, the estimator of $Mf(x)$ in $\mathcal{M}$ is given by
\begin{equation}
\label{eq_krr_estimator}
    K_{xX_{t}}^{M} \bigl(K_{X_{t}X_{t}}^{M} + \lambda I\bigr)^{-1} Y_{t},
\end{equation}
where $K_{xX_{t}}^{M} \in \mathcal{L}(\mathcal{M}^t,\mathcal{M})$ is defined by $K_{xX_{t}}^{M} v = \sum_{i=1}^{t} K^{M}(x,x_i) v_i, \forall v=(v_1,\ldots,v_t)\in\mathcal M^t$, and $K_{X_{t}X_{t}}^{M} \in \mathcal{L}(\mathcal{M}^t,\mathcal{M}^t)$ is the block operator matrix with entries $(K_{X_{t}X_{t}}^{M})_{ij} = K^{M}(x_i,x_j)$. Note that $K^M_{X_{t}X_{t}}$ is a trace-class operator since $K_{X_{t}X_{t}}$ has finite trace. Importantly, \eqref{eq_krr_estimator} coincides with the measured output of the solution to \eqref{eq_KRR}, i.e., $M\mu_t(x)$ (see Appendix~\ref{appendix_lemma_representer_proof}). Thus, working in the measurement space does not change the estimator for the observable quantity $Mf(x)$, but provides a straightforward formulation. In particular, it enables uncertainty quantification directly in $\mathcal{M}$, avoiding the conservatism introduced by bounding errors in the full output space $\mathcal{Y}$.

\paragraph{Concentration bound.} To quantify the uncertainty of the estimator, we define
\[
K_{t}^{M}(x,x) 
= K^{M}(x,x) 
- K_{xX_{t}}^{M} \bigl(K_{X_{t}X_{t}}^{M} + \lambda I\bigr)^{-1} K_{X_{t}x}^{M},
\]
where $K_{X_{t}x}^{M}$ is the adjoint of $K_{xX_{t}}^{M}$. This operator can be interpreted as the posterior variance, which captures the remaining uncertainty at $x$ after observing $Y_{t}$.  Using this characterization, we can derive high-probability confidence bounds for the prediction error in the measurement space.

\begin{theorem}
\label{theorem_partial}
Let Assumptions \ref{assumption_noise}-\ref{assumption_regularity} hold. Assuming that $K_{X_{t}X_{t}}^M$ is a trace-class operator, given $X_{t}$ and $Y_{t}$, with probability at least $1-\zeta$, for any $x\in\mathcal{X}$, the prediction error is bounded by
\[
\|M f(x) \!-\! M \mu_{t}(x)\|_{\mathcal{M}}
\!\leq \!
\left(
\Gamma \!
+ \!\frac{\sigma}{\sqrt{\lambda}}
\sqrt{2 \log(1/\zeta)\! +\! \log \det\bigl(I\! +\! \lambda^{-1} K_{X_{t}X_{t}}^{M}\bigr)}
\right)
\|K_{t}^{M}(x,x)\|_{\mathrm{op}}^{1/2}.
\]
\end{theorem}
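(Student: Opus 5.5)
We work entirely in the induced RKHS $\mathcal{H}_M$ of Lemma~\ref{lemma_m}, where the observations are $y_i = g(x_i)+\delta_i$ with $g := Mf$. The standard argument of \cite{chowdhury2017kernelized} then goes through, with the scalar feature map replaced by the operator-valued one $\Phi_M(x)\in\mathcal{L}(\mathcal{M},\mathcal{H}_M)$.

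\textbf{Step 1: RKHS norm of $g$.} By the construction in \cite{carmeli2010vector} (Proposition 6), $\mathcal{H}_M$ is the image of $\mathcal{H}$ under $h\mapsto Mh(\cdot)$, equipped with the quotient norm $\|g\|_{\mathcal{H}_M}=\inf\{\|h\|_{\mathcal{H}}: Mh=g\}$. Hence $\|Mf\|_{\mathcal{H}_M}\le\|f\|_{\mathcal{H}}\le\Gamma$.

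\textbf{Step 2: error decomposition.} Define the sampling operator $S:\mathcal{H}_M\to\mathcal{M}^t$ by $Sg=(g(x_i))_i$. Its adjoint is $S^*v=\sum_i\Phi_M(x_i)v_i$, and $SS^*=K^M_{X_tX_t}$. The estimator \eqref{eq_krr_estimator} equals $\hat g(x)=\Phi_M(x)^*(S^*S+\lambda I)^{-1}S^*Y_t$. Writing $Y_t=Sg+E$ with $E=(\delta_i)_i$, we get
\[
g(x)-\hat g(x)=\lambda\,\Phi_M(x)^*(S^*S+\lambda I)^{-1}g-\Phi_M(x)^*(S^*S+\lambda I)^{-1}S^*E .
\]
Set $V=S^*S+\lambda I$. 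For any operator $A$, $\|\Phi_M(x)^*V^{-1/2}A\|\le\|\Phi_M(x)^*V^{-1/2}\|_{\mathrm{op}}\,\|A\|$, and
\[
\|\Phi_M(x)^*V^{-1}\Phi_M(x)\|_{\mathrm{op}}=\lambda^{-1}\|K^M_t(x,x)\|_{\mathrm{op}}.
\]
This identity follows from the push-through identity $V^{-1}=\lambda^{-1}\bigl(I-S^*(SS^*+\lambda I)^{-1}S\bigr)$. Consequently:
\begin{itemize}
\item the bias term is at most $\sqrt{\lambda}\,\|V^{-1/2}\|\,\|g\|\,\|K^M_t(x,x)\|_{\mathrm{op}}^{1/2}\le\Gamma\,\|K^M_t(x,x)\|_{\mathrm{op}}^{1/2}$;
\item the noise term is at most $\lambda^{-1/2}\,\|V^{-1/2}S^*E\|_{\mathcal{H}_M}\,\|K^M_t(x,x)\|_{\mathrm{op}}^{1/2}$.
\end{itemize}

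\textbf{Step 3: self-normalized bound.} Note that
\[
\|V^{-1/2}S^*E\|^2=\langle E,S(S^*S+\lambda I)^{-1}S^*E\rangle=\langle E,K(K+\lambda I)^{-1}E\rangle,\qquad K=K^M_{X_tX_t}.
\]
It remains to show that, with probability at least $1-\zeta$ (simultaneously over the adaptive sequence),
\[
\langle E,K(K+\lambda I)^{-1}E\rangle\le\sigma^2\bigl(2\log(1/\zeta)+\log\det(I+\lambda^{-1}K)\bigr).
\]
We use the method of mixtures, following \cite{abbasi2011improved} and its Hilbert-space version in \cite{chowdhury2021no}. For fixed $h\in\mathcal{H}_M$, the process
\[
\exp\Bigl(\langle h,S^*E\rangle/\sigma-\tfrac12\|Sh\|^2\Bigr)
\]
is a supermartingale by Assumption~\ref{assumption_noise}: each increment has the form $\langle\Phi_M(x_i)^*h,\delta_i\rangle$ with an $\mathcal{F}_{i-1}$-measurable direction. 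Next, integrate $h$ against a Gaussian measure on $\mathcal{H}_M$ with covariance $\lambda^{-1}$ times a trace-class restriction. Because $h$ enters only through the finite-rank map $Sh$, it is equivalent to integrate over $u=Sh\in\mathcal{M}^t$ with Gaussian covariance $\lambda^{-1}K$. This is where the trace-class assumption is needed: it makes that Gaussian measure well defined and makes $\det(I+\lambda^{-1}K)$ a finite Fredholm determinant. Completing the square gives
\[
\exp\Bigl(\tfrac{1}{2\sigma^2}\langle E,K(K+\lambda I)^{-1}E\rangle\Bigr)\det(I+\lambda^{-1}K)^{-1/2}.
\]
This is the mixture supermartingale, and Ville/Markov with a stopping-time argument yields the bound. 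Note that $\lambda$ enters here as the regularizer, so the $\sigma/\sqrt{\lambda}$ factor in the statement comes from Step 2 combined with this quantity.

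\textbf{Main obstacle.} The expected difficulty is Step 3 in infinite dimensions. The mixture argument needs Gaussian measures on $\mathcal{M}^t$ that are genuinely infinite-dimensional. I would handle this by projecting onto finite-dimensional eigenspaces of $K$, applying the finite-dimensional argument there, and passing to the limit. Monotone convergence handles the quadratic form, and trace class handles convergence of the determinant.
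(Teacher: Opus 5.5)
Your proposal is correct and follows the paper's proof: the same bias/noise split through $V^{-1/2}$, the push-through identity giving $\|\Phi_M(x)^*V^{-1}\Phi_M(x)\|_{\mathrm{op}}=\lambda^{-1}\|K^M_t(x,x)\|_{\mathrm{op}}$, and a self-normalized noise bound. The only differences are that you work directly in $\mathcal{H}_M$, which makes explicit the quotient-norm bound $\|Mf\|_{\mathcal{H}_M}\le\Gamma$ that the paper uses implicitly, and that you derive by mixtures the concentration step the paper imports from Lemma~3 of \cite{chowdhury2021no}. For that derivation, mix over a fixed, data-independent $\mathcal{M}$-valued Gaussian process $h$ with covariance $\lambda^{-1}K^M$; it exists because every $K^M_{X_tX_t}$ is trace class, which also makes $S$ Hilbert--Schmidt rather than finite-rank. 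Then, conditionally on the points, $(h(x_1),\dots,h(x_t))\sim N(0,\lambda^{-1}K^M_{X_tX_t})$, and your closed form follows directly. Drop the fallback of projecting onto eigenspaces of $K^M_{X_tX_t}$: these depend on the adaptively chosen points, and a mixture built on them is no longer a supermartingale (fixed finite-dimensional projections of $\mathcal{M}$ would be the safe alternative).
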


The proof of Theorem \ref{theorem_partial} can be found in Appendix \ref{appendix_theorem_partial_proof}. The bound consists of two components. The first term, $\log \det\bigl(I + \lambda^{-1} K_{X_{t}X_{t}}^M\bigr)$, resembles the mutual information in the scalar-valued setting. Assuming that $K_{X_{t}X_{t}}^M$ is a trace-class operator, i.e., the sum of its eigenvalues is finite, ensures that the Fredholm determinant is well-defined and finite. The second term, $\|K_{t}^M(x,x)\|_{\mathrm{op}}^{1/2}$, captures the uncertainty at the query point $x$. In practice, this quantity can be evaluated via the largest eigenvalue of $K_{t}^M(x,x)$. This upper bound exhibits a structure similar to the bounds in \cite{chowdhury2017kernelized,chowdhury2021no}. In fact, it can be viewed as a direct generalization of these results, extending the measurement space from a finite-dimensional space to a possibly infinite-dimensional Hilbert space.



\subsection{Special cases}
\label{section_two_cases}

\textbf{Full observation ($M = I$).} When $M = I$, the learner observes the full vector-valued output $f(x)$, and the induced kernel reduces to $K^M(x,s) = K(x,s)$. The objective becomes $\langle m, f(x)\rangle_{\mathcal{Y}}$ with $m \in \mathcal{Y}$. This corresponds to the most informative measurements, where the entire output is observed. In this case, the confidence bound in Theorem~\ref{theorem_partial} provides uncertainty quantification directly over the vector-valued output $f(x)$.

\textbf{Scalar linear measurements ($M: \mathcal{Y} \to \mathbb{R}$).} When $M$ is a bounded linear functional, the observations become scalar-valued. In this case, $K^M(x,s)$ reduces to a scalar kernel, and the problem recovers standard scalar-valued Bayesian optimization. This regime corresponds to minimal information acquisition, where each query reveals only a one-dimensional projection of $f(x)$.





These cases show that our framework unifies both full vector-valued observation and scalar-valued observation as special instances of a general formulation. Intermediate choices of $M$ interpolate between these two cases, providing a principled way to model the trade-off between observation richness and algorithm efficiency.

\section{Algorithm and regret analysis}

We construct an upper and lower bound for $F(x)$ at any iteration $t \geq 1$. In particular, given dataset $X_{t-1}$, $Y_{t-1}$, $F(x) \in [\underline{F}_{t-1}(x), \overline{F}_{t-1}(x)]$, where
\begin{align*}
\overline{F}_{t-1}(x)
&= \langle m, M \mu_{t-1}(x) \rangle_{\mathcal{M}}
+ \|m\|_{\mathcal{M}}\, \|M f(x) - M \mu_{t-1}(x)\|_{\mathcal{M}}, \\
\underline{F}_{t-1}(x)
&= \langle m, M \mu_{t-1}(x) \rangle_{\mathcal{M}}
- \|m\|_{\mathcal{M}}\, \|M f(x) - M \mu_{t-1}(x)\|_{\mathcal{M}}.
\end{align*}

The deviation term $\|M f(x) - M \mu_{t-1}(x)\|_{\mathcal{M}}$ can be bounded using Theorem~\ref{theorem_partial}, yielding a computable confidence interval. To solve problem \eqref{eq_obj_linearfunctional}, we adopt the principle of optimism in the face of uncertainty, leading to a UCB-based algorithm as described in Algorithm~\ref{alg:ucb_obj}.

\begin{algorithm}[t]
\caption{Vector-valued Bayesian optimization (vvBO)}
\label{alg:ucb_obj}
\begin{algorithmic}[1]
\REQUIRE Total budget $T$, regularization parameter $\lambda > 0$, confidence level $\zeta \in (0,1)$
\STATE Initialize dataset $X_0=\emptyset$, $Y_0=\emptyset$ with $M\mu_0(x) = 0$, $K_0^M(x,x) = K^M(x,x)$
\FOR{$t = 1,\dots,T$}
    \STATE Compute $M\mu_{t-1}(x)$, $K_{t-1}^M(x,x)$
    \STATE Choose the next sample via
    \[
    x_t \in \arg\max_{x \in \mathcal{X}} \ \langle m, M \mu_{t-1}(x)\rangle_{\mathcal{M}}
    + \beta_{t-1}\|m\|_{\mathcal{M}}\|K_{t-1}^{M}(x,x)\|_{\mathrm{op}}^{1/2}
    \]
    \STATE Query $f$ at $x_t$ and observe $
    y_t = M f(x_t) + \delta_t$
    \STATE Update the dataset $X_{t}$, $Y_{t}$
\ENDFOR
\end{algorithmic}
\end{algorithm}

\textbf{Algorithm.}
We are now ready to present the \textbf{vector-valued Bayesian Optimization (vvBO)} algorithm with structured measurements. We define the confidence parameter as $\beta_{t-1} = \Gamma + \frac{\sigma}{\sqrt{\lambda}}\sqrt{2\log(1/\zeta) + \log\det\bigl(I + \lambda^{-1}K_{X_{t-1}X_{t-1}}^{M}\bigr)}$. At each iteration $t$, the learner constructs a KRR estimator in the measurement space using the dataset $X_{t-1}, Y_{t-1}$, and forms an upper confidence bound based on Theorem~\ref{theorem_partial}. The next query point is selected as $x_t = \arg\max_{x \in \mathcal{X}} \ \langle m, M \mu_{t-1}(x) \rangle_{\mathcal{M}} + \beta_{t-1} \|m\|_{\mathcal{M}}\|K_{t-1}^M(x,x)\|_{\mathrm{op}}^{1/2}$, 
which balances exploration and exploitation through the confidence parameter $\beta_{t-1}$. The observation is then obtained as $y_t = M f(x_t) + \delta_t$. Notably, the entire procedure operates directly in the measurement space, leveraging structured observations through the induced kernel $K^M$.

\subsection{Regret analysis}
The following theorem establishes a general regret bound for Algorithm~\ref{alg:ucb_obj}.

\begin{theorem}
\label{theorem_regret}
Let Assumptions~\ref{assumption_noise}--\ref{assumption_regularity} hold. For any $\lambda > 0$ and $\zeta \in (0,1)$, with probability at least $1-\zeta$, the regret of Algorithm~\ref{alg:ucb_obj} satisfies
\begin{align*}
    \mathcal{R}_T\leq 2\|m\|_{\mathcal{M}}\left(\Gamma \!+ \!\frac{\sigma}{\sqrt{\lambda}}\sqrt{2\log(\frac{1}{\zeta})+ \log\det\bigl(I \!+ \!\lambda^{-1}K_{X_{T}X_{T}}^{M}\bigr)}\right)\sqrt{T\sum_{t = 1}^T\|K_{t-1}^M(x_t,x_t)\|_{\mathrm{op}}}.
\end{align*}
\end{theorem}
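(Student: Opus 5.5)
The plan is the standard optimism argument of \cite{srinivas2012information,chowdhury2017kernelized}, carried out in the measurement space with Theorem~\ref{theorem_partial} as the confidence bound.

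\textbf{Good event.} Let $E$ be the event on which, simultaneously for all $t\in\{1,\dots,T\}$ and all $x\in\mathcal{X}$,
\[
\|Mf(x)-M\mu_{t-1}(x)\|_{\mathcal{M}}\le \beta_{t-1}\|K_{t-1}^M(x,x)\|_{\mathrm{op}}^{1/2}.
\]
Theorem~\ref{theorem_partial} is stated for a fixed data set, whereas here the $x_t$ are chosen adaptively. I will therefore reopen its proof rather than apply a union bound over $t$. The log-determinant form of $\beta_{t-1}$ points to a self-normalized martingale bound in the style of Abbasi-Yadkori, which holds uniformly in time by a stopping-time argument. Concretely, the noise term is controlled by $\|(K^M_{X_tX_t}+\lambda I)^{-1/2}\,\mathrm{noise}\|$, and the method of mixtures with a Gaussian measure on $\mathcal{M}^t$ gives the Fredholm determinant. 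This is valid because $K^M_{X_tX_t}$ is trace class, and it extends to all $t$ at once via Ville's inequality. The outcome is that $E$ holds with probability at least $1-\zeta$.

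This step is the main obstacle. Taken literally, Theorem~\ref{theorem_partial} is a fixed-design bound, and constructing the Gaussian mixture measure on an infinite-dimensional $\mathcal{M}$ requires trace-class control. I would work with finite-dimensional truncations and pass to the limit, using continuity of the Fredholm determinant in trace norm. I also need $\beta_{t-1}\le\beta_T$, which holds because the log-determinant is monotone in added data points: appending a point multiplies the determinant by a factor at least $1$, by the Schur complement.

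\textbf{Instantaneous regret.} On $E$, write $\sigma_{t-1}(x):=\|K_{t-1}^M(x,x)\|_{\mathrm{op}}^{1/2}$. By Cauchy--Schwarz,
\[
F(x^*)\le \langle m,M\mu_{t-1}(x^*)\rangle_{\mathcal{M}}+\|m\|_{\mathcal{M}}\beta_{t-1}\sigma_{t-1}(x^*)\le \langle m,M\mu_{t-1}(x_t)\rangle_{\mathcal{M}}+\|m\|_{\mathcal{M}}\beta_{t-1}\sigma_{t-1}(x_t),
\]
where the second inequality is the UCB selection rule. Again by Cauchy--Schwarz,
\[
F(x_t)\ge \langle m,M\mu_{t-1}(x_t)\rangle_{\mathcal{M}}-\|m\|_{\mathcal{M}}\beta_{t-1}\sigma_{t-1}(x_t).
\]
Subtracting gives
\[
r_t:=F(x^*)-F(x_t)\le 2\|m\|_{\mathcal{M}}\beta_{t-1}\sigma_{t-1}(x_t)\le 2\|m\|_{\mathcal{M}}\beta_T\sigma_{t-1}(x_t).
\]

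\textbf{Summation.} Summing over $t$ and applying Cauchy--Schwarz,
\[
\mathcal{R}_T=\sum_{t=1}^T r_t\le 2\|m\|_{\mathcal{M}}\beta_T\sum_{t=1}^T\sigma_{t-1}(x_t)\le 2\|m\|_{\mathcal{M}}\beta_T\sqrt{T\sum_{t=1}^T\|K_{t-1}^M(x_t,x_t)\|_{\mathrm{op}}},
\]
which is exactly the claimed bound. The pairwise distinctness assumption is only needed for the kernel matrix definitions. Repeated queries can be handled either by the same formulas, since $\lambda>0$ keeps $K^M_{X_tX_t}+\lambda I$ invertible, or by merging repeated points.
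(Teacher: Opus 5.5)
Your argument is correct and is essentially the paper's proof. On the confidence event the UCB rule gives $r_t\le 2\|m\|_{\mathcal{M}}\beta_{t-1}\|K^M_{t-1}(x_t,x_t)\|_{\mathrm{op}}^{1/2}$; monotonicity of the log-determinant then lets you pull out a single $\beta$ (the paper gets this from the chain-rule identity of Lemma~\ref{appendix_lemma_1}, which is exactly your Schur-complement step), and Cauchy--Schwarz finishes. Your bookkeeping is slightly cleaner than the paper's: you pull out $\beta_T$, which is what the statement contains, whereas the paper writes $\beta_{T+1}$, and you make explicit the time-uniform good event that the paper only inherits from the ``for all $t\geq 1$'' self-normalized bound inside the proof of Theorem~\ref{theorem_partial}. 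One small correction to your sketch: that self-normalized bound controls $\langle\bm{\delta},K^M_{X_tX_t}(K^M_{X_tX_t}+\lambda I)^{-1}\bm{\delta}\rangle_{\mathcal{M}^t}^{1/2}$, not $\|(K^M_{X_tX_t}+\lambda I)^{-1/2}\bm{\delta}\|$.
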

The proof of Theorem~\ref{theorem_regret} is deferred to Appendix~\ref{appendix_proof_theorem_2}. The resulting bound shares a similar structure with classical results in scalar-valued Bayesian optimization~\cite{chowdhury2017kernelized,srinivas2012information}, as well as extensions to finite-dimensional vector-valued settings \cite{chowdhury2021no}. Despite this similarity, establishing sublinear regret in our setting is nontrivial. The term $\log\det(I + \lambda^{-1}K_{X_{T}X_{T}}^M)$ involves an operator-valued kernel and is well-defined only when $K_{X_{T}X_{T}}^M$ is trace-class. However, this condition alone does not characterize how the quantity grows with $T$. Likewise, boundedness of $\|K_{t-1}^M(x_t,x_t)\|_{\mathrm{op}}$ does not control the cumulative growth of the uncertainty terms. These challenges motivate restricting attention to structured kernel classes, under which explicit bounds can be derived.

\textbf{Separable kernel.} We restrict attention to a widely used class of separable kernels \cite{carmeli2010vector}. Specifically, we consider kernels of the form $K = G \otimes B$, where $G:\mathcal{X}\times\mathcal{X}\rightarrow\mathbb{R}$ is a scalar-valued kernel and $B \in \mathcal{L}(\mathcal{Y})$ is a positive trace-class operator. Under a measurement operator $M$, this induces $K^M(x,s) = G(x,s)\, (M B M^*) \in \mathcal{L}(\mathcal{M})$. The trace-class property is preserved under the composition, i.e., $M B M^*$ remains trace-class for bounded $M$. We write $B_M = M B M^* \in\mathcal{L}(\mathcal{M})$ and  
\[
K_{X_{t}X_{t}}^M = G_{X_{t}X_{t}}\otimes B_M,\qquad K_{xX_{t}}^M = G_{xX_{t}}\otimes B_M, \qquad K_{X_{t}x}^M = G_{X_{t}x}\otimes B_M,
\]
where $G_{X_{t}X_{t}}\in \mathbb{R}^{t\times t}$ is the kernel Gram matrix and $G_{xX_{t}}\in \mathbb{R}^{1\times t}$, $G_{X_{t}x}\in \mathbb{R}^{t\times 1}$.

\begin{theorem}[Kernel specific bound]
\label{theorem_kernel_specific}
Let Assumptions \ref{assumption_noise}-\ref{assumption_regularity} hold. For the separable kernel with $B_M$ having finite spectrum, if $\mathcal{X}\subset \mathbb{R}^d$, with probability at least $1-\zeta$, the regret for Algorithm~\ref{alg:ucb_obj} satisfies:
\begin{enumerate}
    \item Linear kernel: $\mathcal{R}_T \leq \mathcal{O}\bigl(\log T\sqrt{T}\bigr)$,
    \item Gaussian kernel: $\mathcal{R}_T \leq \mathcal{O}\bigl((\log T)^{d+1}\sqrt{T})$,
    \item Mat\'ern kernel: $\mathcal{R}_T \leq \mathcal{O}\bigl(T^{d(d+1)/(2\nu+d(d+1))}\log T\sqrt{T}\bigr)$.
\end{enumerate}
\end{theorem}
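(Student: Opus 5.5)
The plan is to start from the general bound of Theorem~\ref{theorem_regret} and reduce both kernel-dependent quantities, $\log\det(I+\lambda^{-1}K^M_{X_TX_T})$ and $\sum_{t=1}^T\|K^M_{t-1}(x_t,x_t)\|_{\mathrm{op}}$, to the scalar maximum information gain of $G$. Once that is done, the known scalar rates for the linear, Gaussian and Mat\'ern kernels finish the argument.

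\textbf{Spectral decomposition of the separable kernel.} Since $B_M$ is positive with finite spectrum, I will write $B_M=\sum_{j=1}^{r}b_j\,e_j\otimes e_j$ with $b_j>0$, $r<\infty$ and orthonormal $e_j\in\mathcal{M}$. The eigenvalues of $G_{X_TX_T}\otimes B_M$ are the products $g_i b_j$, where the $g_i$ are the eigenvalues of $G_{X_TX_T}$. This gives
\[
\log\det\bigl(I+\lambda^{-1}K^M_{X_TX_T}\bigr)=\sum_{j=1}^{r}\log\det\bigl(I+\lambda^{-1}b_jG_{X_TX_T}\bigr)\le r\,\log\det\bigl(I+\lambda^{-1}b_{\max}G_{X_TX_T}\bigr)\le 2r\,\gamma_T,
\]
where $\gamma_T$ is the maximum information gain of $G$ with effective noise $\lambda/b_{\max}$.

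For the posterior term I will use the same eigenbasis. Because $(G_{X X}\otimes B_M+\lambda I)^{-1}$ acts diagonally on each $e_j$, the posterior is
\[
K^M_{t-1}(x,x)=\sum_j b_j\,\bigl(G(x,x)-G_{xX}(G_{XX}+\tfrac{\lambda}{b_j}I)^{-1}G_{Xx}\bigr)\,e_j\otimes e_j
=\sum_j b_j\,\sigma^2_{t-1,j}(x)\,e_j\otimes e_j .
\]
Here $\sigma^2_{t-1,j}$ is the scalar posterior variance of $G$ with regularization $\lambda/b_j$. It follows that $\|K^M_{t-1}(x_t,x_t)\|_{\mathrm{op}}\le\sum_j b_j\sigma^2_{t-1,j}(x_t)$.

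\textbf{Summing the posterior variances.} For each $j$ I will apply the standard scalar argument: $u\le c\log(1+u)$ for $u\in[0,C]$, together with the determinant chain rule $\sum_t\log(1+\tfrac{b_j}{\lambda}\sigma^2_{t-1,j}(x_t))=\log\det(I+\tfrac{b_j}{\lambda}G_{X_TX_T})$. This yields
\[
\sum_{t=1}^{T}\|K^M_{t-1}(x_t,x_t)\|_{\mathrm{op}}\le C'\sum_j\log\det\bigl(I+\lambda^{-1}b_jG_{X_TX_T}\bigr)=\mathcal{O}(\gamma_T),
\]
where the constant depends on $C_K$, $\lambda$, $b_{\max}$ and $r$. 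Substituting both estimates into Theorem~\ref{theorem_regret} gives
\[
\mathcal{R}_T=\mathcal{O}\bigl((1+\sqrt{\gamma_T})\sqrt{T\gamma_T}\bigr)=\mathcal{O}\bigl(\gamma_T\sqrt{T}\bigr).
\]

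\textbf{Inserting the kernel-specific rates.} The known bounds on $\gamma_T$ for $\mathcal{X}\subset\mathbb{R}^d$ (Srinivas et al.) are:
\begin{itemize}
\item linear kernel: $\gamma_T=\mathcal{O}(d\log T)$;
\item Gaussian kernel: $\gamma_T=\mathcal{O}((\log T)^{d+1})$;
\item Mat\'ern kernel: $\gamma_T=\mathcal{O}(T^{d(d+1)/(2\nu+d(d+1))}\log T)$.
\end{itemize}
These give exactly the three stated rates.

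\textbf{Main obstacle.} The delicate step is the variance-sum inequality. It needs $\sigma^2_{t-1,j}(x_t)\le C_K$ uniformly, which requires $G$ to be bounded. I will take that bound from $\|K(x,x)\|_{\mathrm{op}}=G(x,x)\|B\|_{\mathrm{op}}\le C_K$. The linear kernel is only bounded on a compact $\mathcal{X}$, which I will assume. I also need to verify carefully that the inverse block-diagonalizes in the $e_j$ basis, i.e.\ that $G_{XX}\otimes B_M$ and $I$ commute appropriately, so that the per-eigendirection reduction is valid. The finite-spectrum assumption on $B_M$ is what keeps the factor $r$ finite.
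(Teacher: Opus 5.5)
Your proposal is correct and follows the paper's route. You diagonalize $G_{X_TX_T}\otimes B_M$ and $K^M_{t-1}(x,x)$ in the eigenbasis of $B_M$, which reduces both quantities in Theorem~\ref{theorem_regret} to scalar information gains of $G$, and then insert the known rates. The only minor difference is that you bound $\|K^M_{t-1}(x_t,x_t)\|_{\mathrm{op}}$ by the sum over the $r$ eigendirections, whereas the paper takes the maximum and reduces it by monotonicity to the top eigenvalue; your version costs only a harmless factor $r$.
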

The proof of Theorem~\ref{theorem_kernel_specific} is deferred to Appendix~\ref{appendix_theorem_3_proof}. 
The above results show that, with separable kernels, the proposed method achieves sublinear regret rates comparable to those in classical scalar-valued Bayesian optimization. In particular, the dependence on $T$ matches known results despite the system outputs being vector-valued elements in Hilbert spaces. These guarantees follow from combining the high-probability confidence bounds in Theorem~\ref{theorem_partial} with the information-gain bounds used in the proof of Theorem~\ref{theorem_kernel_specific}, demonstrating that efficient sequential optimization is achievable under general measurement operators.

\textbf{Extension to time-varying and nonlinear functions.} As discussed in Section~\ref{section_2}, the proposed algorithm readily extends to time-varying objectives with time-dependent linear functionals $m_t$. Details are provided in Appendix~\ref{appendix_extension}. Furthermore, the results in this work can also be naturally generalized to any nonlinear function $F':\mathcal{M}\rightarrow \mathbb{R}$ satisfying a Lipschitz condition. A detailed discussion is deferred to Appendix~\ref{appendix_extension}.

\textbf{Efficient computation.}
Note that for infinite-dimensional output spaces with general operator-valued kernels, operations such as inner products and norms can be computationally intractable. We provide explicit constructions for separable kernels in the appendix to enable the practical implementation of the proposed bounds and algorithm. The key idea is to exploit the spectral decomposition of the operator-valued kernel and truncate the resulting infinite series at a finite level. This yields a finite-dimensional approximation. A detailed formulation can be found in Appendix \ref{appendix_efficient_computation}.

\section{Numerical experiments}
\label{experiment}
In this section, we demonstrate that leveraging structured measurements improves efficiency in optimizing black-box systems with vector-valued outputs via several numerical experiments, particularly with changing objectives. We compare the proposed vector-valued Bayesian optimization (vvBO) algorithm against multiple baselines, including vanilla BO \cite{chowdhury2017kernelized}, multi-task BO (MTBO) \cite{chowdhury2021no}, function-on-function BO (FFBO) \cite{huang2026function}, and contextual BO (CTBO), where the context is defined as $M^* m$ in the augmented input space (see details in Appendix \ref{appendix_setup}). In all experiments, the underlying system produces structured outputs (e.g., trajectories or functions), and the objective is defined as a linear functional of these outputs. We consider both synthetic benchmarks and a real-world controller tuning task. All results are averaged over 10 independent runs with different initial starting points without pretraining. Simulation details, including baseline implementation, hyperparameters, and computational time, are provided in Appendix~\ref{appendix_setup}.

\textbf{Synthetic problems.}
Here, we explore how structured measurements help to adapt to different objectives. We consider the setting where $M = I$ for vvBO, i.e., measuring the full vector-valued outputs. Additional results for partial observations ($M \neq I$) are provided in Appendix~\ref{appendix_synthetic}. The optimization process is divided into three phases, each corresponding to a different objective defined by $m_i, i\in\{1,2,3\}$ (see detailed formulation in Appendix \ref{appendix_synthetic}). All the methods retain and reuse all previously collected samples across phases. Additionally, we also compare against vanilla BO (rBO) and multi-task BO (rMTBO) re-initialized when shifting to the next phase. Performance is evaluated using both simple and cumulative regret. The results for three test functions are shown in Fig.~\ref{fig:regret_test_function}. Across all experiments, vvBO with full vector-valued measurement consistently outperforms the baseline methods for different objectives. Since most baseline methods operate on scalar observations or fixed finite-dimensional representations, they cannot fully exploit the structure of the output or adjust effectively to changes in the objective. In contrast, our approach models the underlying vector-valued structure, facilitating effective reuse of information and rapid tracking. As for CTBO, while capable of handling varying objectives, it incurs larger cumulative regret since for each new context, it requires a large amount of exploration before establishing an accurate model. A detailed comparison between vvBO and CTBO is provided in Appendix~\ref{appendix_synthetic}.

\begin{figure}[t]
    \centering
    \includegraphics[width=\linewidth]{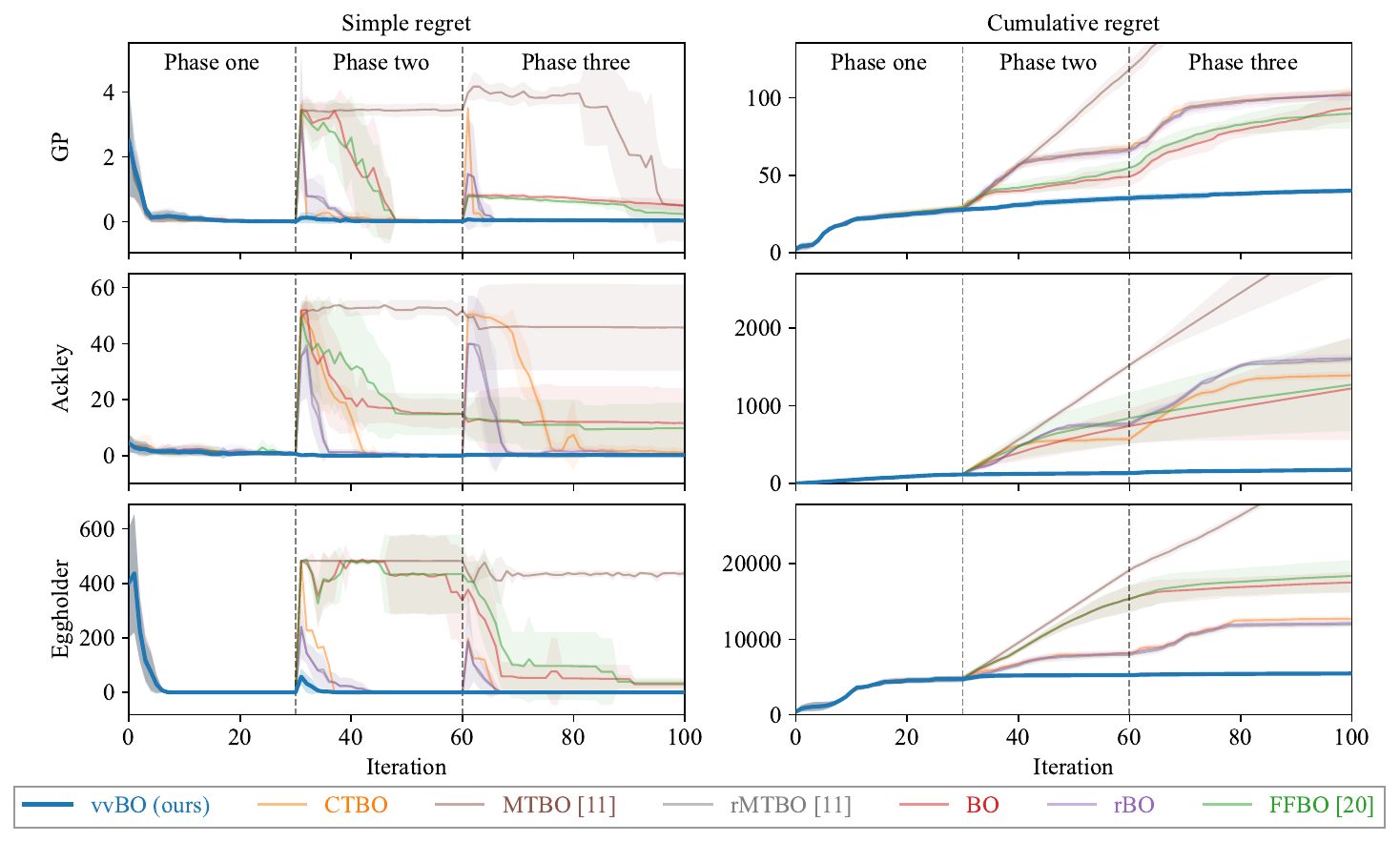}
    \caption{Comparison of simple regret and cumulative regret for three test operators with different baseline methods. vvBO with full trajectory measurements has the lowest regret in all scenarios, which indicates that it efficiently accommodates to different objectives.}
    \label{fig:regret_test_function}
\end{figure}

\textbf{Time-varying objectives in real-world controller tuning.}
We next evaluate vvBO in a more realistic setting with time-varying objectives. We consider a real-world problem of tuning a Model Predictive Control (MPC) controller, which regulates indoor temperature in a commercial building testbed \cite{yang2020implementation}. The simulated building has a surface area of $8500\,\mathrm{m}^2$ and serves approximately 1350 occupants, with heating provided by a district heating network. To control the temperature, the MPC controller regulates the valve position of a radiator and is parameterized by three variables. The goal is to choose the optimal parameter that minimizes a weighted sum of energy consumption, $\mathrm{CO}_2$ emissions, and thermal discomfort. This economic objective is commonly found in building control, which captures the trade-off between energy efficiency and thermal comfort \cite{chen2015model}. To create realistic conditions, we introduce a time-varying heating price following a sinusoidal pattern, resulting in a sequence of time-varying objectives (see detailed formulation in Appendix~\ref{appendix_boptest}). Controller parameters are updated daily, and temperature and heating trajectories are continuously measured throughout each day. Experiments are conducted over 200 iterations with varying initial conditions.

The first row of Fig.~\ref{fig:boptest} presents the cost during the active learning stage. After 200 iterations, vvBO achieves the lowest cost among all baseline methods. The impact of the time-varying heating price is reflected in the left plot. In the early phase, vvBO behaves similarly to baseline methods, focusing on exploration. As more data are collected, it adapts effectively to the varying prices by reducing the temperature when prices are high to save energy (peaks), and increasing the temperature when prices are low to improve comfort (valleys). Consequently, after around 150 iterations, vvBO significantly reduces the peak cost. Details on how the heating price affects the total cost can be found in Appendix \ref{appendix_boptest}. This reveals the ability of vvBO to respond to time-varying objectives through rich measurements.

In the second row of Fig.~\ref{fig:boptest}, the validation of the learned model is illustrated. In the left plot, we vary the heating price and evaluate the economic cost associated with the recommended parameter from each method. vvBO achieves the lowest cost across all price levels, while other baseline methods incur high costs. In the right plot, we consider a new objective that is commonly found in building control \cite{GELAZANSKAS201422,kosek2013overview}, minimizing the peak heating power at specific times of the day. This objective is never encountered during the active learning stage, which evaluates the ability to transfer information across different objectives. vvBO continues to recommend effective parameters across all time instances, whereas CTBO, which achieves comparable performance for the economic objective, performs poorly due to large uncertainty in unseen contexts. Indeed, it requires substantially more data to learn the new context. These results highlight that, by leveraging structured measurements, vvBO can generalize across objectives without modeling changing components, allowing fast accommodation in time-varying and previously unseen settings.

\begin{figure}[t]
    \centering
    \includegraphics[width=\linewidth]{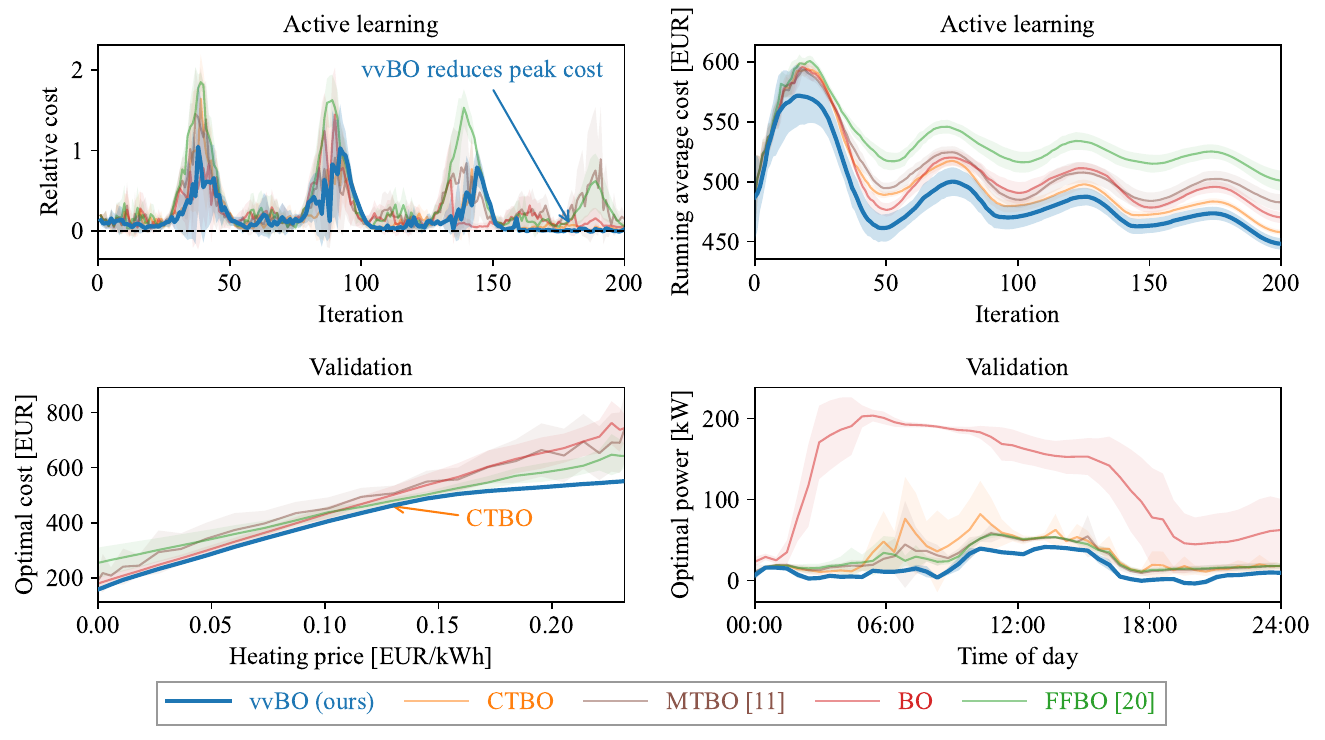}
    \caption{Real-world tuning of an MPC controller for building climate control.}
    \label{fig:boptest}
\end{figure}
\section{Conclusion and limitations}
\label{conclusion}
In this work, we study Bayesian optimization for systems that produce structured, vector-valued outputs observed through linear measurement operators. By modeling directly in the measurement space, we derive general concentration bounds for vector-valued KRR in the measurement space. Building on these results, we propose a UCB-based algorithm and establish sublinear regret guarantees for common kernels. This shows that effective learning is also possible when measurements lie in a general Hilbert space. Furthermore, our work highlights the importance of the observation model in Bayesian optimization: when structured measurements are available, each measurement can reveal substantially richer information about the underlying system than scalar observations. Leveraging this structure can facilitate more efficient learning and information sharing across multiple objectives. Numerical experiments validate these benefits in both synthetic and real-world scenarios.

While the proposed framework provides a general approach to Bayesian optimization with structured measurements, it has several limitations. First, similar to standard BO methods, performance may degrade in high-dimensional input spaces due to the curse of dimensionality. Second, our analysis focuses on unconstrained settings, whereas many applications impose constraints on trajectories or structured outputs \cite{wang2024decentralized,wang2025personalized,chen2015model}. Extending the framework to incorporate constraints and safety considerations is an important direction for future work.

Regarding social impact, the proposed framework has the potential to improve decision-making in applications involving complex dynamical systems, such as energy management, climate control, and environmental monitoring, by enabling more efficient use of data and better utilization of structured information. However, as with many data-driven optimization methods, care must be taken to ensure robustness, privacy, fairness, and reliability when deployed in real-world systems.

{\small
\bibliographystyle{plain}
\bibliography{reference}
}

\appendix
\section{Common linear functionals}
\label{appendix_linear_functionals}

In this section, we provide examples of linear functionals in common Hilbert spaces. In all cases, continuous linear functionals admit a representation via the inner product, as characterized by the Riesz representation theorem~\cite{akhiezer1981theory}.

\paragraph{Finite-dimensional spaces.}

In $\mathbb{R}^n$ with inner product
\[
\langle x, y \rangle = \sum_{i=1}^n x_i y_i,
\]
every linear functional $\varphi$ can be written as
\[
\varphi(x) = \sum_{i=1}^n a_i x_i = \langle x, y \rangle,
\]
where $y = (a_1, \dots, a_n) \in \mathbb{R}^n$.

\paragraph{Sequence space.}

The Hilbert space $\ell^2$ consists of square-summable sequences with inner product
\[
\langle x, y \rangle = \sum_{k=1}^\infty x_k y_k.
\]
Every continuous linear functional $\varphi$ admits the representation
\[
\varphi(x) = \sum_{k=1}^\infty c_k x_k = \langle x, y \rangle,
\]
for some $y = \{c_k\}_{k=1}^\infty \in \ell^2$.

\paragraph{Square-integrable function spaces.}

Let $(\Omega,\mu)$ be a measure space. The space $L^2(\Omega,\mu)$ has inner product
\[
\langle f, g \rangle = \int_\Omega f(x)g(x)\, d\mu(x).
\]
By the Riesz representation theorem, every continuous linear functional $\varphi$ can be expressed as
\[
\varphi(f) = \int_\Omega f(x) g(x)\, d\mu(x) = \langle f, g \rangle,
\]
for a unique $g \in L^2(\Omega,\mu)$.

\paragraph{Sobolev spaces.}

Sobolev spaces are Hilbert spaces equipped with inner products involving derivatives. For example, in $H^1(\Omega)$,
\[
\langle u, v \rangle = \int_\Omega \nabla u \cdot \nabla v + uv \, dx.
\]
Every continuous linear functional $\varphi$ admits a representation
\[
\varphi(u) = \langle u, h \rangle_{H^1},
\]
for a unique $h \in H^1(\Omega)$.

\paragraph{Reproducing kernel Hilbert spaces.}

Let $\mathcal{H}$ be a scalar-valued RKHS with kernel $G : \mathcal{X} \times \mathcal{X} \to \mathbb{R}$. By the reproducing property,
\[
f(x) = \langle f, G(x,\cdot) \rangle_{\mathcal{H}}.
\]

The evaluation functional $L(f) = f(x_0)$ is represented by $G(x_0,\cdot)$. More generally, for
\[
L(f) = \sum_{i=1}^n a_i f(x_i),
\]
the representer is
\[
y = \sum_{i=1}^n a_i G(x_i,\cdot).
\]

Integral functionals can also be expressed in this form. For example, let
\[
L(f) = \int_{\mathcal{X}} g(x) f(x)\, dx,
\]
where the integral is well-defined, and the resulting functional is continuous. Using the reproducing property,
\[
L(f) = \int_{\mathcal{X}} g(x) \langle f, G(x,\cdot) \rangle_{\mathcal{H}} dx
= \left\langle f, \int_{\mathcal{X}} g(x) G(x,\cdot)\, dx \right\rangle_{\mathcal{H}}.
\]
Thus, the representer is
\[
y = \int_{\mathcal{X}} g(x) G(x,\cdot)\, dx \in \mathcal{H}.
\]

For two such functionals with weights $g_1$ and $g_2$, their inner product is
\[
\langle y_1, y_2 \rangle_{\mathcal{H}} 
= \int_{\mathcal{X}} \int_{\mathcal{X}} g_1(x) g_2(y) G(x,y)\, dx\, dy.
\]

More generally, for
\[
L(f) = \sum_{i=1}^n a_i \int_{\mathcal{X}} g_i(x) f(x)\, dx,
\]
the corresponding representer is
\[
y = \sum_{i=1}^n a_i \int_{\mathcal{X}} g_i(x) G(x,\cdot)\, dx.
\]

\section{Analytical formulation for the vector-valued estimator}
\label{appendix_lemma_representer_proof}
In the following lemma, we provide the explicit formulation for \eqref{eq_KRR}.
\begin{lemma}
\label{lemma_representer}
    Let Assumptions \ref{assumption_noise}-\ref{assumption_regularity} hold. For any $x\in\mathcal{X}$, $\mu_t(x)$ is given by
    \[
    \mu_t(x) = \sum_{i=1}^tK(x,x_i)M^*\alpha_i,
    \]
    where $\alpha_i\in\mathcal{M}$ solves a linear system of equations, which is given by
    \[
\sum_{j=1}^t(MK(x_i,x_j)M^* + \lambda I_{i = j})\alpha_j = y_i,
\]
where $I_{i = j}$ is the indicator operator, i.e., $I_{i = j} = I$ if $i = j$ and $I_{i = j} = 0$ otherwise.
\end{lemma}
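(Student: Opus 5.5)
The plan is a representer-theorem argument followed by first-order optimality in $\mathcal{H}$. Write the objective of \eqref{eq_KRR} as
\[
J(\tilde f)=\sum_{i=1}^t\|M\tilde f(x_i)-y_i\|_{\mathcal{M}}^2+\lambda\|\tilde f\|_{\mathcal{H}}^2 .
\]
The first step is to express evaluation-then-measurement through the feature map $\Phi$. For $v\in\mathcal{M}$, the reproducing property gives
\[
\langle v, M\tilde f(x_i)\rangle_{\mathcal{M}}=\langle M^*v,\tilde f(x_i)\rangle_{\mathcal{Y}}=\langle \Phi(x_i)M^*v,\tilde f\rangle_{\mathcal{H}} .
\]
Hence the map $A_i:\tilde f\mapsto M\tilde f(x_i)$ is bounded from $\mathcal{H}$ to $\mathcal{M}$, with adjoint $A_i^*=\Phi(x_i)M^*$. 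The objective then becomes $J(\tilde f)=\sum_i\|A_i\tilde f-y_i\|^2+\lambda\|\tilde f\|^2$. This is continuous and $\lambda$-strongly convex on the Hilbert space $\mathcal{H}$, so it has a unique minimizer $\mu_t$, and that minimizer is characterized by the vanishing of the Fréchet gradient.

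Next I would compute the gradient and set it to zero:
\[
\sum_i A_i^*(A_i\mu_t-y_i)+\lambda\mu_t=0 .
\]
Rearranging gives $\mu_t=\sum_i\Phi(x_i)M^*\alpha_i$ with $\alpha_i:=\lambda^{-1}(y_i-M\mu_t(x_i))\in\mathcal{M}$. Evaluating at $x$ and using $(\Phi(x_i)y)(x)=K(x_i,x)y$ gives $\mu_t(x)=\sum_i K(x_i,x)M^*\alpha_i$. I then need $K(x_i,x)=K(x,x_i)$ in the stated orientation, which is the convention $K(x,s)y=(\Phi(x)y)(s)$ with Hermitian symmetry $K(x,s)^*=K(s,x)$. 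I would check this carefully, and if needed restate it via the standard identity $\Phi(s)y(x)=K(x,s)y$ from \cite{micchelli2005learning,carmeli2010vector}. I expect this orientation bookkeeping to be the only delicate point.

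Finally, to obtain the linear system, substitute the representation back into the definition of $\alpha_i$:
\[
\lambda\alpha_i=y_i-M\mu_t(x_i)=y_i-\sum_j MK(x_i,x_j)M^*\alpha_j .
\]
This is exactly $\sum_j\bigl(MK(x_i,x_j)M^*+\lambda I_{i=j}\bigr)\alpha_j=y_i$, i.e.\ $(K^M_{X_tX_t}+\lambda I)\alpha=Y_t$ on $\mathcal{M}^t$.

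For completeness I would also argue that this system is uniquely solvable. $K^M_{X_tX_t}$ is a positive bounded operator on $\mathcal{M}^t$, because
\[
\langle v,K^M_{X_tX_t}v\rangle=\Bigl\|\sum_i\Phi(x_i)M^*v_i\Bigr\|_{\mathcal{H}}^2\ge 0 .
\]
So $K^M_{X_tX_t}+\lambda I\succeq\lambda I$ is invertible, and the $\alpha_i$ are well defined. Conversely, any solution $\alpha$ of the system yields, through the representation, an $\tilde f$ satisfying the optimality condition, so it coincides with $\mu_t$.

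As a by-product, applying $M$ gives
\[
M\mu_t(x)=K^M_{xX_t}(K^M_{X_tX_t}+\lambda I)^{-1}Y_t ,
\]
which confirms the agreement with \eqref{eq_krr_estimator} claimed in the main text. This step needs no assumption beyond boundedness of $M$ and $K$; the trace-class condition is not needed here.
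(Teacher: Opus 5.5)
Your proof is correct, but it runs in the opposite direction from the paper's.

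The paper starts from the candidate $\mu_t=\sum_i\Phi(x_i)M^*\alpha_i$, with $\alpha$ solving the system, and writes any competitor as $\mu_t+g$. It then expands the objective exactly. The cross terms $2\lambda\langle\mu_t,g\rangle_{\mathcal{H}}$ and $-2\sum_i\langle Mg(x_i),y_i-M\mu_t(x_i)\rangle_{\mathcal{M}}$ reduce to $\pm 2\lambda\sum_i\langle\alpha_i,Mg(x_i)\rangle_{\mathcal{M}}$ and cancel. What remains is the objective at $\mu_t$ plus $\sum_i\|Mg(x_i)\|_{\mathcal{M}}^2+\lambda\|g\|_{\mathcal{H}}^2$.

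That cancellation is exactly your first-order condition tested against $g$, so both proofs rest on the same identity. The paper's completing-the-square check is more elementary: it needs no existence theorem for minimizers of strongly convex functionals and no Fr\'echet derivative, and it gives uniqueness immediately. Your route instead derives the representer form as a necessary condition at a minimizer known to exist.

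What your route adds is the piece the paper takes for granted. Its verification presupposes that $(K^M_{X_tX_t}+\lambda I)\alpha=Y_t$ has a solution. Your positivity computation $\langle v,K^M_{X_tX_t}v\rangle=\|\sum_i\Phi(x_i)M^*v_i\|_{\mathcal{H}}^2\ge 0$ is what guarantees it, and it also gives the converse identification.

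On orientation, Hermitian symmetry alone only gives $K(x_i,x)=K(x,x_i)^*$, not $K(x_i,x)=K(x,x_i)$. Your fallback of reading the kernel in the standard orientation $(\Phi(s)y)(x)=K(x,s)y$ is the right fix. The paper's own proof makes the same silent identification when it writes $M\mu_t(x_i)=\sum_j MK(x_i,x_j)M^*\alpha_j$ from $\mu_t=\sum_j\Phi(x_j)M^*\alpha_j$. The distinction vanishes whenever the kernel values are self-adjoint, e.g.\ scalar kernels or separable kernels with self-adjoint $B$.
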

\begin{proof}
    For any $f\in\mathcal{H}$, we can express it as $f = \mu_t + g$, where $g\in\mathcal{H}$ represents the residual. Note that
\begin{align*}
    \sum_{i=1}^t \|Mf(x_i) - &y_i\|_{\mathcal{M}}^2 
+ \lambda \|f\|_{\mathcal{H}}^2\\
&= \sum_{i=1}^t \|M\mu_t(x_i) + Mg(x_i) - y_i\|_{\mathcal{M}}^2 
+ \lambda \|\mu_t + g\|_{\mathcal{H}}^2\\
& = \sum_{i=1}^t \|Mg(x_i)\|_{\mathcal{M}}^2 + \sum_{i=1}^t \|M\mu_t(x_i) - y_i\|_{\mathcal{M}}^2  - 2 \sum_{i=1}^t \langle Mg(x_i),y_i - M\mu_t(x_i)\rangle_{\mathcal{M}}\\
&\quad + \lambda \|\mu_t\|_{\mathcal{H}}^2+\lambda \| g\|_{\mathcal{H}}^2 + 2\lambda \langle\mu_t,g\rangle_{\mathcal{H}}\\
 &= \sum_{i=1}^t \|M\mu_t(x_i) - y_i\|_{\mathcal{M}}^2 + \lambda \|\mu_t\|_{\mathcal{H}}^2\\
 &\quad + \sum_{i=1}^t \|Mg(x_i)\|_{\mathcal{M}}^2 + \lambda \| g\|_{\mathcal{H}}^2\\
 &\quad + 2\lambda \langle\mu_t,g\rangle_{\mathcal{H}} - 2 \sum_{i=1}^t \langle Mg(x_i),y_i - M\mu_t(x_i)\rangle_{\mathcal{M}}.
\end{align*}
For the last two term, we have
\begin{align*}
    \lambda\langle\mu_t,g\rangle_{\mathcal{H}}&= \lambda\Big\langle \sum_{i = 1}^t \Phi(x_i)M^*\alpha_i, g\Big\rangle_{\mathcal{H}} \\
    &= \lambda\sum_{i = 1}^t\langle\Phi(x_i)M^*\alpha_i, g\rangle_{\mathcal{H}}\\
    &= \lambda\sum_{i = 1}^t\langle M^*\alpha_i, g(x_i)\rangle_{\mathcal{Y}}\\
    &= \lambda\sum_{i = 1}^t\langle \alpha_i,Mg(x_i)\rangle_{\mathcal{M}},
\end{align*}
\begin{align*}
    \sum_{i=1}^t \langle Mg(x_i),y_i - M\mu_t(x_i)\rangle_{\mathcal{M}} &= \sum_{i=1}^t \Big\langle Mg(x_i),\sum_{j=1}^t(MK(x_i,x_j)M^* + \lambda I_{i = j})\alpha_j - M\mu_t(x_i)\Big\rangle_{\mathcal{M}}\\
    & = \lambda\sum_{i=1}^t \Big\langle Mg(x_i),\sum_{j=1}^t I_{i = j}\alpha_j\Big\rangle_{\mathcal{M}}\\
    & = \lambda\sum_{i=1}^t \langle Mg(x_i),\alpha_i\rangle_{\mathcal{M}}.
\end{align*}
Hence, the minimum is attained at $g = 0$.
\end{proof}

\section{Proof for Theorem \ref{theorem_partial}}
\label{appendix_theorem_partial_proof}
This proof follows the same structure as \cite{chowdhury2021no}, adapted to a general vector-valued setting. We highlight the necessary modifications when they arise.

We first establish the result for the case $M = I$, where the learner observes the full output $f(x)$. The general result in Theorem~\ref{theorem_partial} then follows by applying the same argument to the composed function $Mf$, which takes values in $\mathcal{M}$ and lies in a vector-valued RKHS with kernel $K^M$. To start, we write $K^I = K$ for simplicity. We employ the feature representation of an operator-valued kernel \cite{micchelli2005learning}, writing
\[
K(x,s) = \Phi^*(x)\Phi(s),
\]
where $\Phi(x) \in \mathcal{L}(\mathcal{Y},\mathcal{W})$ for some Hilbert space $\mathcal{W}$. Then there exists \(w \in \mathcal W\) such that
\[
f(\cdot)=\Phi^*(\cdot)w.
\]

Define the operator $\Phi_{X_{t}} \in \mathcal{L}(\mathcal{W}, \mathcal{Y}^t)$ by
\[
(\Phi_{X_{t}} w)_i = \Phi^*(x_i) w, \quad \forall w \in \mathcal{W},
\]
and denote its adjoint by $\Phi_{X_{t}}^*$. The vector-valued KRR estimator $\mu_{t}(x)$ can be expressed as
\begin{align*}
\mu_{t}(x)
&= \Phi^*(x)\Phi_{X_{t}}^*(\Phi_{X_{t}}\Phi_{X_{t}}^* + \lambda I)^{-1}Y_{t} \\
&= \Phi^*(x)\Phi_{X_{t}}^*(\Phi_{X_{t}}\Phi_{X_{t}}^* + \lambda I)^{-1}(\Phi_{X_{t}} w + \bm{\delta}) \\
&= \Phi^*(x)(\Phi_{X_{t}}^*\Phi_{X_{t}} + \lambda I)^{-1}\Phi_{X_{t}}^*(\Phi_{X_{t}} w + \bm{\delta}) \\
&= \Phi^*(x)(\Phi_{X_{t}}^*\Phi_{X_{t}} + \lambda I)^{-1}(\Phi_{X_{t}}^*\Phi_{X_{t}} w + \Phi_{X_{t}}^*\bm{\delta}) \\
&= f(x) - \Phi^*(x)(\Phi_{X_{t}}^*\Phi_{X_{t}} + \lambda I)^{-1}(\lambda w - \Phi_{X_{t}}^*\bm{\delta}),
\end{align*}
where $\bm{\delta} = (\delta_1,\cdots, \delta_{t})\in\mathcal{Y}^t$.

Consequently,
\begin{align*}
\|\mu_{t}(x) - f(x)\|_{\mathcal{Y}}
&= \left\|\Phi^*(x)(\Phi_{X_{t}}^*\Phi_{X_{t}} + \lambda I)^{-1}(\lambda w - \Phi_{X_{t}}^*\bm{\delta})\right\|_{\mathcal{Y}} \\
&\leq \|(\Phi_{X_{t}}^*\Phi_{X_{t}} + \lambda I)^{-1/2}\Phi(x)\|_{\mathrm{op}} \\
&\quad \cdot \|(\Phi_{X_{t}}^*\Phi_{X_{t}} + \lambda I)^{-1/2}(\lambda w - \Phi_{X_{t}}^*\bm{\delta})\|_{\mathcal{W}} \\
&\leq \|\Phi^*(x)(\Phi_{X_{t}}^*\Phi_{X_{t}} + \lambda I)^{-1}\Phi(x)\|_{\mathrm{op}}^{1/2} \\
&\quad \cdot \Big( \|\Phi_{X_{t}}^*\bm{\delta}\|_{(\Phi_{X_{t}}^*\Phi_{X_{t}} + \lambda I)^{-1}} + \lambda^{1/2}\|w\|_{\mathcal{W}} \Big),
\end{align*}
where we used $\|A\|_{\mathrm{op}}\le\|A^*A\|_{\mathrm{op}}^{1/2}$ and the bound
\[
\|w\|_{(\Phi_{X_{t}}^*\Phi_{X_{t}} + \lambda I)^{-1}} 
\leq \lambda^{-1/2} \|w\|_{\mathcal{W}}.
\]

Next, observe that
\begin{align*}
\lambda \Phi^*(x)(\Phi_{X_{t}}^*\Phi_{X_{t}} + \lambda I)^{-1}\Phi(x)
&= \Phi^*(x)\Phi(x) - \Phi^*(x)\Phi_{X_{t}}^*(\Phi_{X_{t}}\Phi_{X_{t}}^* + \lambda I)^{-1}\Phi_{X_{t}}\Phi(x) \\
&= K(x,x) - K_{xX_{t}} (K_{X_{t}X_{t}} + \lambda I)^{-1} K_{X_{t}x} \\
&= K_{t}(x,x),
\end{align*}
we have
\[
\|\Phi^*(x)(\Phi_{X_{t}}^*\Phi_{X_{t}} + \lambda I)^{-1}\Phi(x)\|_{\mathrm{op}}^{1/2}
= \lambda^{-1/2} \|K_{t}(x,x)\|_{\mathrm{op}}^{1/2}.
\]

For the noise term, applying Lemma~3 of \cite{chowdhury2021no} (with $\mathcal{W} = \ell^2$) yields that, with probability at least $1-\zeta$, for all $t \geq 1$,
\[
\|\Phi_{X_{t}}^*\bm{\delta}\|_{(\Phi_{X_{t}}^*\Phi_{X_{t}} + \lambda I)^{-1}}
\leq 
\sigma \sqrt{2\log(1/\zeta) + \log\det\bigl(I + \lambda^{-1}\Phi_{X_{t}}^*\Phi_{X_{t}}\bigr)}.
\]

Using Sylvester's determinant identity \cite{IsozakiKorotyaev2011},
\[
\det\bigl(I + \lambda^{-1}\Phi_{X_{t}}^*\Phi_{X_{t}}\bigr)
= \det\bigl(I + \lambda^{-1}\Phi_{X_{t}}\Phi_{X_{t}}^*\bigr)
= \det\bigl(I + \lambda^{-1}K_{X_{t}X_{t}}\bigr).
\]

Combining the above bounds and observing that $\|w\|_{\mathcal{W}} = \|f\|_{\mathcal{H}} \leq \Gamma$, we obtain
\[
\|f(x)-\mu_{t}(x)\|_{\mathcal{Y}}
\leq 
\left(
\Gamma + \frac{\sigma}{\sqrt{\lambda}}
\sqrt{2\log(1/\zeta) + \log\det\bigl(I + \lambda^{-1}K_{X_{t}X_{t}}\bigr)}
\right)
\|K_{t}(x,x)\|_{\mathrm{op}}^{1/2}.
\]
Applying the same argument to the composed function \(Mf\), whose RKHS kernel is \(K^M\), yields Theorem~\ref{theorem_partial}.

\section{Proof for Theorem \ref{theorem_regret}}
\label{appendix_proof_theorem_2}
We first present a useful lemma from \cite{chowdhury2021no}.
\begin{lemma}
\label{appendix_lemma_1}
    For any $\lambda>0$ and $t\geq1$, we have 
    \[
    \log\det\bigl(I + \lambda^{-1}K_{X_tX_t}^{M}\bigr) = \sum_{i = 1}^t\log\det\bigl(I + \lambda^{-1}K_{i-1}^{M}(x_i,x_i)\bigr).
    \]
\end{lemma}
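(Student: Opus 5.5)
The plan is to prove the identity by induction on $t$, using the Schur complement factorization of the Fredholm determinant of the block operator $I+\lambda^{-1}K^{M}_{X_tX_t}$ on $\mathcal{M}^t=\mathcal{M}^{t-1}\oplus\mathcal{M}$. For $t=1$ the claim is immediate, since $K^{M}_0(x_1,x_1)=K^{M}(x_1,x_1)$ and $K^{M}_{X_1X_1}=K^{M}(x_1,x_1)$. For the inductive step, write
\[
\lambda I+K^{M}_{X_tX_t}=\begin{pmatrix} \lambda I+K^{M}_{X_{t-1}X_{t-1}} & K^{M}_{X_{t-1}x_t}\\ K^{M}_{x_tX_{t-1}} & \lambda I+K^{M}(x_t,x_t)\end{pmatrix}.
\]
Since $\lambda I+K^{M}_{X_{t-1}X_{t-1}}$ is positive and boundedly invertible, the standard block LDU factorization gives
\[
\lambda I+K^{M}_{X_tX_t}=L\begin{pmatrix} \lambda I+K^{M}_{X_{t-1}X_{t-1}} & 0\\ 0 & \lambda I+K^{M}_{t-1}(x_t,x_t)\end{pmatrix}L^*.
\]
Here $L$ is block unit lower triangular with off-diagonal entry $K^{M}_{x_tX_{t-1}}(\lambda I+K^{M}_{X_{t-1}X_{t-1}})^{-1}$. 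The Schur complement is exactly $\lambda I+K^{M}_{t-1}(x_t,x_t)$, by the definition of the posterior operator.

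Dividing by $\lambda$ and taking Fredholm determinants then gives
\[
\det(I+\lambda^{-1}K^{M}_{X_tX_t})=\det(I+\lambda^{-1}K^{M}_{X_{t-1}X_{t-1}})\,\det(I+\lambda^{-1}K^{M}_{t-1}(x_t,x_t)).
\]
Taking logarithms and applying the induction hypothesis yields the stated sum. Each determinant is positive because the operators $K^M_{X_{t-1}X_{t-1}}$ and $K^M_{t-1}(x_t,x_t)$ are positive. Positivity of $K^M_{t-1}(x_t,x_t)$ follows from its variational representation as $\lambda\,\Phi^*(\Phi_{X}^*\Phi_X+\lambda I)^{-1}\Phi$, as in Appendix~\ref{appendix_theorem_partial_proof}.

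The main obstacle is justifying the determinant manipulations in infinite dimensions, since Fredholm determinants are defined only for $I+T$ with $T$ trace class. Three facts are needed.

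\begin{enumerate}
\item The operators $K^{M}_{X_tX_t}$ and $K^{M}_{t-1}(x_t,x_t)$ are trace class. The first is assumed. The second holds because $K^M_{t-1}(x_t,x_t)=K^M(x_t,x_t)-(\text{positive})$, where $K^M(x_t,x_t)$ is a diagonal block of a trace-class operator. A positive operator dominated by a trace-class operator is trace class.
\item $L-I$ is trace class, or at least $\det(L)=1$. I would try first to avoid $\det L$ entirely. To do so, scale the LDU factorization by $\lambda^{-1}$, so that $I+\lambda^{-1}K^M_{X_tX_t}=L\,D\,L^*$ with $D=\mathrm{diag}\big(I+\lambda^{-1}K^{M}_{X_{t-1}X_{t-1}},\,I+\lambda^{-1}K^{M}_{t-1}(x_t,x_t)\big)$. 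Then use multiplicativity $\det(ABC)=\det A\det B\det C$ for $A,C\in I+\mathcal{S}_1$. Here $L-I$ is trace class because its single off-diagonal block is a bounded operator times the trace-class block $K^{M}_{x_tX_{t-1}}$. That block is trace class: a corner of a trace-class operator is trace class, being $P K^M_{X_tX_t} Q$ with $P,Q$ coordinate projections. A nilpotent trace-class perturbation has determinant $1$, since all its eigenvalues vanish.
\item The block-diagonal determinant factorizes. This follows from $\det(\mathrm{diag}(I+A,I+B))=\det(I+A)\det(I+B)$, for instance via the eigenvalue-product (Lidskii) formula.
\end{enumerate}

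With these checks the finite-dimensional Schur-complement argument of \cite{chowdhury2021no} carries over verbatim.
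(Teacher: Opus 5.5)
Your proof is correct, and it is more complete than what the paper offers. The paper gives no argument of its own. It imports the identity from \cite{chowdhury2021no}, whose outputs live in $\mathbb{R}^n$, so every determinant there is an ordinary matrix determinant. The paper never checks that the chain rule survives when $\det$ is the Fredholm determinant on $\mathcal{M}^t$.

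Your block-LDU induction is the standard finite-dimensional argument. The checks you add are exactly what is needed to make it rigorous in this setting:
\begin{itemize}
\item the blocks $K^M_{x_tX_{t-1}}$, $K^M_{X_{t-1}X_{t-1}}$ and $K^M(x_t,x_t)$ are trace class as corners of $K^M_{X_tX_t}$;
\item $0\le K^M_{t-1}(x_t,x_t)\le K^M(x_t,x_t)$ makes the Schur complement trace class;
\item $L-I$ is a nilpotent trace-class operator, so $\det L=\det L^*=1$;
\item Fredholm determinants are multiplicative and factorize over block-diagonal operators.
\end{itemize}

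One small simplification: you do not need the feature representation to get positivity of $K^M_{t-1}(x_t,x_t)$. Set $A=\lambda I+K^M_{X_{t-1}X_{t-1}}$ and $B=K^M_{X_{t-1}x_t}$. Evaluating the quadratic form of $\lambda I+K^M_{X_tX_t}\ge\lambda I$ at $(-A^{-1}Bu,u)$ gives $\langle u,(\lambda I+K^M_{t-1}(x_t,x_t))u\rangle\ge\lambda\|u\|^2$ directly. The whole proof then stays in $\mathcal{M}^t$ and uses the trace-class hypothesis only in the form the paper assumes it, namely on the Gram operator.
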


From Lemma \ref{appendix_lemma_1}, we know that $\log\det\bigl(I + \lambda^{-1}K_{X_tX_t}^{M}\bigr)$ is an increasing function with $t$. Hence, we conclude that $\beta_t\leq \beta_{t+1}$ for all $t\geq 0$.

We can write the regret as 
\begin{align*}
    \mathcal{R}_T &= \sum_{t = 1}^TF(x^*)-F(x_t)\\
    & = \sum_{t = 1}^TF(x^*)-\overline{F}_{t-1}(x_t) + \overline{F}_{t-1}(x_t) -F(x_t)\\
    & = \sum_{t = 1}^TF(x^*)-\overline{F}_{t-1}(x^*) + \overline{F}_{t-1}(x^*) - \overline{F}_{t-1}(x_t) +\overline{F}_{t-1}(x_t) - F(x_t)\\
    &\leq 2\sum_{t = 1}^T\beta_{t-1}\|m\|_{\mathcal M}\|K_{t-1}^{M}(x_t,x_t)\|_{\mathrm{op}}^{1/2}\\
    &\leq 2\beta_{T+1}\|m\|_{\mathcal M}\sum_{t = 1}^T\|K_{t-1}^{M}(x_t,x_t)\|_{\mathrm{op}}^{1/2}\\
    &\leq 2\beta_{T+1}\|m\|_{\mathcal M}\sqrt{T\sum_{t = 1}^T\|K_{t-1}^{M}(x_t,x_t)\|_{\mathrm{op}}}.
\end{align*}
Substituting the definition of $\beta_{T+1}$ yields the result.

\section{Proof for Theorem \ref{theorem_kernel_specific}}
\label{appendix_theorem_3_proof}
The following lemma shows that the operator-valued information quantities can be reduced to scalar information gain with separable kernels. This reduction is key to obtaining explicit regret bounds.
\begin{lemma}
\label{lemma_gamma}
Let $K = G \otimes B$. The operator-valued quantities in Theorem~\ref{theorem_regret} are upper bounded by the scalar information gain associated with $G$. In particular,
\begin{align*}
\log\det\bigl(I + \lambda^{-1}G_{X_TX_T}\otimes B_M \bigr)
&\leq \sum_{i  = 1}^{\infty} \gamma_T\!\left(G, \frac{\lambda}{\lambda_i^{B_M}}\right), \\
\sum_{t=1}^T \|K_{t-1}^M(x_t,x_t)\|_{\mathrm{op}}
&\leq 2 C_K \, \gamma_T(G,\frac{\lambda}{C_K}),
\end{align*}
where $\{\lambda_i^{B_M}\}_i$ denotes the point spectrum of $B_M$, and $\gamma_T(G,\lambda) = \max_{X_T}\log\det\!\bigl(I + \lambda^{-1}G_{X_TX_T}\bigr)$ is the information gain associated with the scalar kernel $G$.
\end{lemma}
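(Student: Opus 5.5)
The plan is to diagonalize $B_M$ and reduce everything to scalar Gram matrices. Since $B_M = MBM^*$ is positive and trace-class, the spectral theorem gives $B_M = \sum_i \lambda_i^{B_M} e_i\otimes e_i$ for an orthonormal family $\{e_i\}$ in $\mathcal{M}$. On the complement of its span, $B_M$ vanishes and contributes nothing. In this basis $G_{X_TX_T}\otimes B_M$ is block diagonal, acting on $\mathbb{R}^T\otimes \mathrm{span}\{e_i\}$ as $\lambda_i^{B_M} G_{X_TX_T}$. The Fredholm determinant is well defined because the operator is trace class, with trace $\mathrm{tr}(G_{X_TX_T})\,\mathrm{tr}(B_M)<\infty$. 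It therefore factorizes:
\[
\log\det\bigl(I+\lambda^{-1}G_{X_TX_T}\otimes B_M\bigr)=\sum_{i}\log\det\Bigl(I+\tfrac{\lambda_i^{B_M}}{\lambda}G_{X_TX_T}\Bigr)\le \sum_i \gamma_T\Bigl(G,\tfrac{\lambda}{\lambda_i^{B_M}}\Bigr).
\]
The inequality holds term by term, by the definition of $\gamma_T$ as a maximum over input sets. The one step needing care is the product formula for infinite block-diagonal trace-class operators. I would justify it by taking the limit of finite truncations $P_N(\cdot)P_N$, using continuity of the Fredholm determinant in trace norm.

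For the second inequality, I would first compute the posterior operator explicitly. Using the same eigenbasis and $(G\otimes B_M+\lambda I)^{-1}$ acting blockwise, I expect
\[
K^M_{t-1}(x,x)=\sum_i \lambda_i^{B_M}\,\sigma^2_{t-1}\Bigl(x;\tfrac{\lambda}{\lambda_i^{B_M}}\Bigr)\, e_i\otimes e_i,
\]
where $\sigma^2_{t-1}(x;\rho)=G(x,x)-G_{xX}(G_{XX}+\rho I)^{-1}G_{Xx}$ is the scalar posterior variance with regularizer $\rho$. Hence $\|K^M_{t-1}(x,x)\|_{\mathrm{op}}=\max_i \lambda_i^{B_M}\sigma^2_{t-1}(x;\lambda/\lambda_i^{B_M})$.

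Next I use that $\rho\mapsto \rho^{-1}\sigma^2(x;\rho)$ is decreasing, while $\sigma^2(x;\rho)$ itself is increasing in $\rho$. Equivalently, $c\,\sigma^2(x;\lambda/c)$ is increasing in $c$: it is the posterior variance of the kernel $cG$ at regularizer $\lambda$, which grows with $c$. Now $\lambda_i^{B_M}\le\|B_M\|_{\mathrm{op}}$, and I will take $\|B_M\|_{\mathrm{op}}\,\sup_x G(x,x)\le C_K$ (normalizing $G(x,x)\le 1$ so that $\|B_M\|\le C_K$, absorbing $\|M\|$ as needed). This gives
\[
\|K^M_{t-1}(x_t,x_t)\|_{\mathrm{op}}\le C_K\,\sigma^2_{t-1}(x_t;\lambda/C_K).
\]
The rest is the standard scalar argument. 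Write $s_t=\sigma^2_{t-1}(x_t;\lambda/C_K)$, so that $\lambda^{-1}C_K s_t = \rho^{-1}s_t$ with $\rho=\lambda/C_K$. I then bound $u\le 2\log(1+u)$ for $u\in[0,1]$. If $\rho^{-1}s_t$ may exceed $1$, I instead use the $\rho$-dependent constant $u\le \frac{c}{\log(1+c)}\log(1+u)$ for $u\le c$, and absorb it; presumably the factor $2$ in the statement presumes $\lambda\ge C_K$ or a similar normalization. Combined with the scalar chain rule $\sum_t\log(1+\rho^{-1}s_t)=\log\det(I+\rho^{-1}G_{X_TX_T})\le\gamma_T(G,\rho)$ (the scalar case of Lemma~\ref{appendix_lemma_1}), this gives $\sum_t s_t\le 2\rho\,\gamma_T(G,\rho)$ up to that constant, and hence the claimed bound of order $C_K\gamma_T(G,\lambda/C_K)$.

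I expect the main obstacle to be the monotonicity step that lets a single regularizer $\lambda/C_K$ dominate the maximum over infinitely many eigen-directions, together with tracking the constant in the elementary logarithm inequality. The determinant factorization is routine once the trace-class limit argument is in place.
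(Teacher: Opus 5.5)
Your route is the paper's:
\begin{itemize}
\item factor the Fredholm determinant along the eigenbasis of $B_M$ (the paper uses the joint eigenvalues $\lambda_t^G\lambda_i^{B_M}$ and an interchange of sums; your truncation argument with trace-norm continuity is an equally valid justification);
\item compute $\|K^M_{t-1}(x,x)\|_{\mathrm{op}}=\max_i\lambda_i^{B_M}\sigma^2_{t-1}(x;\lambda/\lambda_i^{B_M})$ blockwise (the paper's Lemma~\ref{appendix_lemma_2});
\item replace $\lambda/\lambda_i^{B_M}$ by $\lambda/C_K$ using monotonicity of $c\,\sigma^2(x;\lambda/c)$ in $c$ (the paper reads this off the feature form $\lambda\varphi(x)^\top(\varphi_X^\top\varphi_X+\tfrac{\lambda}{c}I)^{-1}\varphi(x)$; your ``posterior variance of $cG$'' argument is the same fact);
\item finish with an elementary logarithm inequality and the scalar chain rule.
\end{itemize}
The first inequality, and everything up to $\|K^M_{t-1}(x_t,x_t)\|_{\mathrm{op}}\le C_K s_t$, is fine. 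This presumes $\|B_M\|_{\mathrm{op}}\le C_K$ and $\sup_x G(x,x)\le 1$, which the paper also uses tacitly, e.g.\ with $G(x,x)\equiv 1$ and $\|M\|_{\mathrm{op}}\le 1$. Note that an upper normalization of $G$ alone does not give $\|B_M\|_{\mathrm{op}}\le C_K$.

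The gap is the constant in the last step. With $\rho=\lambda/C_K$, your estimate $\sum_t s_t\le 2\rho\,\gamma_T(G,\rho)$ gives $C_K\sum_t s_t\le 2\lambda\,\gamma_T(G,\lambda/C_K)$, not $2C_K\gamma_T(G,\lambda/C_K)$. In the regime $\lambda\ge C_K$ where you invoke it, this is weaker than the claim, and the claim is in fact false there. Take $\mathcal{Y}=\mathcal{M}=\mathbb{R}$, $M=B=1$, $G(x,x)\equiv 1$, $T=1$, $\lambda=2$: the left side is $1$ and the right side is $2\log(3/2)\approx 0.81$.

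Your diagnosis of the regime is reversed. Your own fallback, $u\le\frac{a}{\log(1+a)}\log(1+u)$ on $[0,a]$ with $a=C_K/\lambda$, yields
\[
\sum_{t=1}^T\|K^M_{t-1}(x_t,x_t)\|_{\mathrm{op}}\le\frac{C_K}{\log(1+C_K/\lambda)}\,\gamma_T\Bigl(G,\frac{\lambda}{C_K}\Bigr).
\]
This coefficient is at most $2C_K$ exactly when $C_K\ge(\sqrt{e}-1)\lambda$. It is at most $C_K+\lambda$ unconditionally, since $\log(1+a)\ge a/(1+a)$. So the lemma as stated needs such a condition, or the constant $C_K+\lambda$ in place of $2C_K$.

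The paper's proof has the same hole. Its step $C_K\sum_t s_t\le C_K\log\det\bigl(I+\frac{C_K}{\lambda}G_{X_TX_T}\bigr)$ drops the factor $\lambda/C_K$ and is valid only when $C_K\ge(e-1)\lambda$. None of this affects the $T$-rates in Theorem~\ref{theorem_kernel_specific}, where $\lambda$ and $C_K$ are fixed constants.
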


The proof of Lemma~\ref{lemma_gamma} is provided in Appendix \ref{appendix_proof_lemma_3}. For many commonly used scalar kernels $G$, the information gain $\gamma_T(G,\lambda)$ admits sublinear growth in $T$. However, one can observe that $\log\det\bigl(I + \lambda^{-1}G_{X_TX_T}\otimes B_M \bigr)$ is upper bounded by an infinite sum of information gains with increasing noise variance, which is not guaranteed to be finite. To guarantee a sublinear regret bound, we further assume that the spectrum decomposition of $B_M$ is composed of a finite series, i.e., $B_M = \sum_{i = 1}^{n}\lambda^{B_M}_i\phi_i\phi_i^*$. Applying the result from Lemma \ref{lemma_gamma} in Theorem \ref{theorem_regret}, we can bound the cumulative regret by
\begin{align*}
    \mathcal{R}_T\leq 2\|m\|_{\mathcal{M}}\left(\Gamma + \frac{\sigma}{\sqrt{\lambda}}\sqrt{2\log(\frac{1}{\zeta})+ \sum_{i  = 1}^{n} \gamma_T\!\left(G, \frac{\lambda}{\lambda_i^{B_M}}\right)\bigr)}\right)\sqrt{2 C_K T \gamma_T(G,\frac{\lambda}{C_K})}.
\end{align*}
For the linear kernel,
\[
\gamma_T = \mathcal{O}(\log T),
\]
for the square exponential kernel,
\[
\gamma_T = \mathcal{O}((\log T)^{d+1}),
\]
for the Mat\'ern kernel with $\nu>1$,
\[
\gamma_T = \mathcal{O}(T^{d(d+1)/(2\nu+d(d+1))}\log T).
\]
Consequently, we have 
\begin{enumerate}
    \item Linear kernel: $\mathcal{R}_T \leq \mathcal{O}\bigl(\log T\sqrt{T}\bigr)$,
    \item Gaussian kernel: $\mathcal{R}_T \leq \mathcal{O}\bigl((\log T)^{d+1}\sqrt{T})$,
    \item Mat\'ern kernel: $\mathcal{R}_T \leq \mathcal{O}\bigl(T^{d(d+1)/(2\nu+d(d+1))}\log T\sqrt{T}\bigr)$.
\end{enumerate}

\section{Proof for Lemma \ref{lemma_gamma}}
\label{appendix_proof_lemma_3}
For the first part, according to the Fredholm determinant for positive definite trace class operators \cite{minh2017infinite}, we observe that 
\begin{align*}
    \log\det\!\bigl(I + \lambda^{-1}G_{X_TX_T}\otimes B_M\bigr) = \sum_{t = 1}^T\sum_{i = 1}^{\infty} \log(1 + \frac{1}{\lambda}\lambda_t^G\lambda_i^{B_M})\leq \frac{1}{\lambda}\sum_{t = 1}^T\sum_{i = 1}^{\infty}\lambda_t^G\lambda_i^{B_M}<\infty.
\end{align*}
Hence, applying Fubini's theorem for infinite series (switching the order of summation for an absolutely convergent series), we obtain
\begin{align*}
    \log\det\!\bigl(I + \lambda^{-1}G_{X_TX_T}\otimes B_M\bigr) &= \sum_{t = 1}^T\sum_{i = 1}^{\infty} \log(1 + \frac{1}{\lambda}\lambda_t^G\lambda_i^{B_M})\\
    & = \sum_{i = 1}^{\infty} \sum_{t = 1}^T\log(1 + \frac{1}{\lambda}\lambda_t^G\lambda_i^{B_M})\\
    & = \sum_{i = 1}^{\infty}\log\det\!\bigl(I + \frac{\lambda_i^{B_M}}{\lambda}G_{X_TX_T})\\
    & \leq \sum_{i = 1}^{\infty}\gamma_T(G,\frac{\lambda}{\lambda_i^{B_M}}).
\end{align*}

For the second part, we first present a useful lemma.
\begin{lemma}
\label{appendix_lemma_2}
The operator norm of $K_{t-1}^M(x_t,x_t)$ is given by 
    \begin{align*}
    \|K_{t-1}^M(x_t,x_t)\|_{\mathrm{op}} &= \max_{j} \lambda^{B_M}_j\Big(G(x_t,x_t) - G_{x_tX_{t-1}}\Big(G_{X_{t-1}X_{t-1}} + \frac{\lambda}{\lambda^{B_M}_j}\Big)^{-1}G_{X_{t-1}x_t}\Big),
    \end{align*}
where $\lambda_j^{B_M}$ is the point spectrum of $B_M$.
\end{lemma}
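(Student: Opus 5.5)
The plan is to diagonalize the separable structure via the spectral decomposition of $B_M$ and reduce the operator-valued posterior variance to a family of scalar posterior variances. Since $B_M$ is positive and trace-class, hence compact and self-adjoint, the spectral theorem gives
\[
B_M = \sum_j \lambda_j^{B_M}\phi_j\phi_j^*
\]
with $\{\phi_j\}$ orthonormal. The eigenvectors can be completed to an orthonormal basis of $\mathcal{M}$ by adding kernel directions, on which everything vanishes. Accordingly, $\mathcal{M}^{t-1}\cong \mathbb{R}^{t-1}\otimes\mathcal{M}$ decomposes as an orthogonal direct sum of the invariant subspaces $\mathbb{R}^{t-1}\otimes\mathrm{span}\{\phi_j\}$.

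First, I would note that $K^M_{X_{t-1}X_{t-1}}+\lambda I = G_{X_{t-1}X_{t-1}}\otimes B_M + \lambda I_{t-1}\otimes I$. On the $j$-th invariant subspace it acts as
\[
\bigl(\lambda_j^{B_M}G_{X_{t-1}X_{t-1}}+\lambda I\bigr)\otimes \phi_j\phi_j^*,
\]
and on the kernel of $B_M$ it acts as $\lambda I$. The inverse is therefore block diagonal, with $j$-th block $(\lambda_j^{B_M}G_{X_{t-1}X_{t-1}}+\lambda I)^{-1}\otimes\phi_j\phi_j^*$.

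Second, I would compose with $K^M_{x_tX_{t-1}} = G_{x_tX_{t-1}}\otimes B_M$ on the left and its adjoint on the right. Using $B_M\phi_j=\lambda_j^{B_M}\phi_j$, this yields
\[
K_{t-1}^M(x_t,x_t)=\sum_j \Bigl[\lambda_j^{B_M}G(x_t,x_t) - (\lambda_j^{B_M})^2 G_{x_tX_{t-1}}\bigl(\lambda_j^{B_M}G_{X_{t-1}X_{t-1}}+\lambda I\bigr)^{-1}G_{X_{t-1}x_t}\Bigr]\phi_j\phi_j^*.
\]
Factoring one $\lambda_j^{B_M}$ out of the inverse gives the coefficient
\[
\lambda_j^{B_M}\Bigl(G(x_t,x_t)-G_{x_tX_{t-1}}\bigl(G_{X_{t-1}X_{t-1}}+\tfrac{\lambda}{\lambda_j^{B_M}}\bigr)^{-1}G_{X_{t-1}x_t}\Bigr).
\]
Directions in $\ker B_M$ contribute $0$.

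Third, $K_{t-1}^M(x_t,x_t)$ is thus a self-adjoint operator diagonal in the basis $\{\phi_j\}$. Each coefficient is nonnegative, being $\lambda_j^{B_M}$ times a scalar posterior variance with regularizer $\lambda/\lambda_j^{B_M}>0$. Hence its operator norm is the supremum of these coefficients. That supremum is attained as a maximum: the coefficients are bounded by $\lambda_j^{B_M}G(x_t,x_t)$, which tends to $0$ as $\lambda_j^{B_M}\to0$, while at least one coefficient is positive (or all vanish, which is trivial).

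The main obstacle is the rigorous justification in infinite dimensions. One must verify that the block-diagonal inverse is indeed the bounded inverse on all of $\mathcal{M}^{t-1}$, with the series converging strongly. I would handle this by checking that the proposed block operator is bounded, since each block has norm at most $1/\lambda$, and that it inverts the original operator on each invariant subspace, whose closed span is the whole space.
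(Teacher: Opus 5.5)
Your proposal is correct and follows essentially the same route as the paper. Both diagonalize via the spectral decomposition of $B_M$, invert $G_{X_{t-1}X_{t-1}}\otimes B_M+\lambda I$ blockwise as $\sum_j(\lambda_j^{B_M}G_{X_{t-1}X_{t-1}}+\lambda I)^{-1}\otimes\phi_j\phi_j^*$, read off $K_{t-1}^M(x_t,x_t)$ as diagonal in $\{\phi_j\}$, and take the largest coefficient; your explicit handling of $\ker B_M$, the boundedness of the block inverse, and the attainment of the supremum (via $c_j\le\lambda_j^{B_M}G(x_t,x_t)\to 0$) just make rigorous what the paper leaves implicit.
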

\begin{proof}
    Given that $B_M$ is a trace-class operator, we denote its spectrum decomposition by $B_M = \sum_{j=1}^{\infty}\lambda_{j}^{B_M}\phi_j\phi_j^*$. Similarly, $G_{X_{t-1}X_{t-1}} = \sum_{i = 1}^{t-1}\lambda_i^G\theta_i\theta_i^*$.
    Consequently, we have
    \begin{align*}
        (G_{X_{t-1}X_{t-1}}\otimes B_M + \lambda I)^{-1} &= \sum_{i=1}^{t-1}
        \sum_{j=1}^{\infty}
        (\lambda_i^G\lambda_j^{B_M}\!+\!\lambda)^{-1}
        (\theta_i\otimes\phi_j)
        (\theta_i\otimes\phi_j)^*\\
        &=\sum_{i=1}^{t-1}
        \sum_{j=1}^{\infty}
        (\lambda_i^G\lambda_j^{B_M}\!+\!\lambda)^{-1}
        (\theta_i\otimes\phi_j)
        (\theta_i^*\otimes\phi_j^*)\\
        &=\sum_{j=1}^{\infty}\sum_{i=1}^{t-1}
        (\lambda_i^G\lambda_j^{B_M}\!+\!\lambda)^{-1}
        (\theta_i\theta_i^*)\otimes(\phi_j\phi_j^*)\\
        &=\sum_{j=1}^{\infty}\Big(\lambda_j^{B_M}G_{X_{t-1}X_{t-1}} + \lambda I\Big)^{-1}\otimes(\phi_j\phi_j^*).
    \end{align*}
    For $K_{t-1}^M(x_t,x_t)$, we have 
    \begin{align*}
        K_{t-1}^M(&x_t,x_t)\\
        &= G(x_t,x_t)\otimes B_M - (G_{x_tX_{t-1}}\otimes B_M)(G_{X_{t-1}X_{t-1}}\otimes B_M + \lambda I)^{-1}(G_{X_{t-1}x_t}\otimes B_M)\\
        &= \sum_{j=1}^{\infty}G(x_t,x_t)\lambda_{j}^{B_M}\phi_j\phi_j^* - \sum_{j=1}^{\infty}G_{x_{t}X_{t-1}}\Big(\lambda_j^{B_M}G_{X_{t-1}X_{t-1}} + \lambda I\Big)^{-1}G_{X_{t-1}x_t}(B_M\phi_j\phi_j^*B_M)\\
        &= \sum_{j=1}^{\infty}\Big(G(x_t,x_t)\lambda_{j}^{B_M} - (\lambda_{j}^{B_M})^2G_{x_{t-1}X_{t-1}}\Big(\lambda_j^{B_M}G_{X_{t-1}X_{t-1}} + \lambda I\Big)^{-1}G_{X_{t-1}x_t}\Big)\phi_j\phi_j^*.
    \end{align*}
    Following this expression, we conclude the proof by choosing the largest point spectrum, which is a well-defined value given that $K_{t-1}^M(x,x)$ is a bounded positive trace-class operator.
\end{proof}
Now, according to Mercer's theorem \cite{micchelli2006universal}, there exist a mapping $\varphi:\mathcal{X}\rightarrow \ell^2$ such that $G(x,x) = \varphi^{\top}(x)\varphi(x)$. We also define an operator $\varphi_{X_{t-1}}:\ell^2\rightarrow\mathbb{R}^{t-1}$ via $\varphi_{X_{t-1}}a = [\varphi(x_1)^{\top}a, \dots, \varphi(x_{t-1})^{\top}a]^{\top}, \forall a \in \ell^2$. Then $G_{X_{t-1}X_{t-1}}$ can be expressed as $\varphi_{X_{t-1}}\varphi_{X_{t-1}}^{\top}$. Then, applying the Woodbury identity we have
\begin{align*}
    G(x,x) - G_{xX_{t-1}}&(G_{X_{t-1}X_{t-1}} + \lambda I)^{-1}G_{X_{t-1}x}\\
    &=\varphi(x)^{\top}\varphi(x) - \varphi(x)^{\top}\varphi_{X_{t-1}}^{\top}(\varphi_{X_{t-1}}\varphi_{X_{t-1}}^{\top} + \lambda I)^{-1}\varphi_{X_{t-1}}\varphi(x)\\
    &=\lambda\varphi(x)^{\top}(\varphi_{X_{t-1}}^{\top}\varphi_{X_{t-1}} + \lambda I)^{-1}\varphi(x).
\end{align*}
Applying Lemma \ref{appendix_lemma_2}, we have 
\begin{align*}
    \|K_{t-1}^M(x_t,x_t)\|_{\mathrm{op}} &= \max_{j} \lambda^{B_M}_j\Big(G(x_t,x_t) - G_{x_tX_{t-1}}\Big(G_{X_{t-1}X_{t-1}} + \frac{\lambda}{\lambda^{B_M}_j}I\Big)^{-1}G_{X_{t-1}x_t}\Big)\\
    & = \max_{j}\lambda\varphi(x_t)^{\top}(\varphi_{X_{t-1}}^{\top}\varphi_{X_{t-1}} + \frac{\lambda}{\lambda^{B_M}_j} I)^{-1}\varphi(x_t)\\
    & \leq \lambda\varphi(x_t)^{\top}(\varphi_{X_{t-1}}^{\top}\varphi_{X_{t-1}} + \frac{\lambda}{C_K} I)^{-1}\varphi(x_t)\\
    & = C_K\Big(G(x_t,x_t) - G_{x_tX_{t-1}}(G_{X_{t-1}X_{t-1}} + \frac{\lambda}{C_K} I)^{-1}G_{X_{t-1}x_t}\Big).
\end{align*}
Summing over $t$, we have
\begin{align*}
    \sum_{t = 1}^T\|K_{t-1}^M(x_t,x_t)\|_{\mathrm{op}}&\leq C_K\sum_{t = 1}^TG(x_t,x_t) - G_{x_tX_{t-1}}(G_{X_{t-1}X_{t-1}} + \frac{\lambda}{C_K} I)^{-1}G_{X_{t-1}x_t}\\
    &\leq C_K \log \det(I  + \frac{C_K}{\lambda}G_{X_TX_T})\\
    &\leq 2C_K\gamma_T(G, \frac{\lambda}{C_K}).
\end{align*}

\section{Extension to time-varying and nonlinear objectives}
\label{appendix_extension}

\subsection{Time-varying objective}

When the functional $m_t$ varies over time, the objective becomes $F_t(x) = \langle m_t, M f(x)\rangle_{\mathcal{M}}$. At each iteration, the objective admits uncertainty bounds $F_t(x) \in [\underline{F}_t(x), \overline{F}_t(x)]$, where
\begin{align*}
\overline{F}_t(x)
&= \langle m_t, M \mu_{t-1}(x) \rangle_{\mathcal{M}}
+ \|m_t\|_{\mathcal{M}}\, \|M f(x) - M \mu_{t-1}(x)\|_{\mathcal{M}}, \\
\underline{F}_t(x)
&= \langle m_t, M \mu_{t-1}(x) \rangle_{\mathcal{M}}
- \|m_t\|_{\mathcal{M}}\, \|M f(x) - M \mu_{t-1}(x)\|_{\mathcal{M}}.
\end{align*}

Since only the functional $m_t$ varies while the estimator of $M f$ remains unchanged, the uncertainty quantification in the measurement space can be reused across time. This allows the framework to handle time-varying objectives without modifying the underlying estimator.

Based on this formulation, we present Algorithm~\ref{alg:ucb_tv} for sequential optimization under time-varying objectives. The algorithm follows the same principle as its static counterpart, with the key difference that the confidence width $\beta_{t-1}\|m_t\|_{\mathcal{M}}\,\|K_{t-1}^{M}(x,x)\|_{\mathrm{op}}^{1/2}$ now depends on the current objective $m_t$. This adaptation allows the uncertainty quantification to adjust dynamically to changes in the objective, leading to more accurate exploration–exploitation trade-offs.

\begin{algorithm}[t]
\caption{Time-varying vector-valued Bayesian optimization}
\label{alg:ucb_tv}
\begin{algorithmic}[1]
\REQUIRE Total budget $T$, regularization parameter $\lambda > 0$, confidence level $\zeta \in (0,1)$
\STATE Initialize dataset $X_0=\emptyset$, $Y_0=\emptyset$ with $M\mu_0(x) = 0$, $K_0^M(x,x) = K^M(x,x)$
\FOR{$t = 1,\dots,T$}
    \STATE Compute $M\mu_{t-1}(x)$ and $K_{t-1}^M(x,x)$
    \STATE Select
    \[
    x_t \in \arg\max_{x \in \mathcal{X}} 
    \langle m_t, M \mu_{t-1}(x)\rangle_{\mathcal{M}}
    + \beta_{t-1}\|m_t\|_{\mathcal{M}}\|K_{t-1}^{M}(x,x)\|_{\mathrm{op}}^{1/2}
    \]
    \STATE Query $f$ at $x_t$ and observe $y_t = M f(x_t) + \delta_t$
    \STATE Update $X_t, Y_t$
\ENDFOR
\end{algorithmic}
\end{algorithm}

\subsection{Nonlinear Lipschitz objective}
\label{appendix_nonlinear}

Let $F' : \mathcal{M} \to \mathbb{R}$ be $L_{F'}$-Lipschitz, i.e., $|F'(m_1) - F'(m_2)| \leq L_{F'} \|m_1 - m_2\|_{\mathcal{M}}$. Applying this with $m_1 = M f(x)$ and $m_2 = M \mu_t(x)$ gives
\begin{align*}
F'(M f(x)) 
&\leq F'(M \mu_t(x)) + L_{F'} \|M f(x) - M \mu_t(x)\|_{\mathcal{M}}, \\
F'(M f(x)) 
&\geq F'(M \mu_t(x)) - L_{F'} \|M f(x) - M \mu_t(x)\|_{\mathcal{M}}.
\end{align*}

Thus, the same uncertainty bound in the measurement space can be propagated to nonlinear objectives via Lipschitz continuity. The remainder of the argument follows from the linear case.

\section{Efficient computation}
\label{appendix_efficient_computation}

Consider a separable kernel of the form $K^M(x,s) = G(x,s)\,B_M$, where
\[
B_M = \sum_{i=1}^{\infty} \lambda_i^{B_M} \, \phi_i \phi_i^*
\]
is a positive trace-class operator. Although $B_M$ is trace-class, it can have infinitely many nonzero eigenvalues.

\paragraph{Finite-rank approximation.}
To obtain a tractable representation, we approximate $B_M$ by truncating its spectral decomposition to the leading $n$ components:
\[
B_{M,n} = \sum_{i=1}^{n} \lambda_i^{B_M} \, \phi_i \phi_i^*.
\]
This yields a finite-rank approximation that captures the dominant spectral components and enables efficient computation.

Under this approximation, any observation $y_i$ can be represented in the basis $\{\phi_i\}_{i=1}^n$ as
\[
y_i = \sum_{j=1}^n \bar{Y}_{ij} \phi_j,
\]
where $\bar{Y}_t \in \mathbb{R}^{t \times n}$ collects the corresponding coefficients.

\paragraph{Representation of the estimator.}
Using this basis representation, the coefficients of $M\mu_t(x)$ can be computed explicitly. In particular, the coefficient corresponding to $\phi_i$ is given by
\[
\lambda_i^{B_{M,n}}\, G_{xX_t}
\bigl(\lambda_i^{B_{M,n}} G_{X_tX_t} + \lambda I\bigr)^{-1}
\bar{Y}_{:,i},
\]
where $\bar{Y}_{:,i} \in \mathbb{R}^t$ denotes the $i$-th column of $\bar{Y}_t$.

\paragraph{Efficient evaluation.}
The above representation allows efficient computation of key quantities. For example,
\begin{align*}
\log\det\bigl(\lambda^{-1}& G_{X_tX_t} \!\otimes\! B_{M,n}\! + \!I\bigr)\\
&\!= \!\sum_{j=1}^n \log\det\bigl(\lambda_j^{B_{M,n}}\lambda^{-1} G_{X_tX_t} + I\bigr), \\
\langle m, M\mu_t(x)\rangle_{\mathcal{M}} 
&\!= \!\sum_{i=1}^n \bar{m}_i \lambda_i^{B_{M,n}}\, G_{xX_t}
\bigl(\lambda_i^{B_{M,n}} G_{X_tX_t} + \lambda I\bigr)^{-1}
\bar{Y}_{:,i}, \\
K_t^M(x,x) 
&\!=\!\sum_{j=1}^{n}\!\Big(\!
G(x,x)\lambda_{j}^{B_{M,n}} 
\!- \!(\lambda_{j}^{B_{M,n}})^2 G_{xX_t}\!
\bigl(\lambda_j^{B_{M,n}} \!G_{X_tX_t} \!+ \!\lambda I\bigr)^{-1}\!
G_{X_tx}
\!\Big)\phi_j\phi_j^*,
\end{align*}
where $\bar{m}_i$ is the coordinate of $m$ under basis $\{\phi_i\}_{i=1}^n$, i.e., $m=\sum_{i=1}^n \bar m_i\phi_i$.

\paragraph{Optimization.}
To select the next query point, we solve
\[
x_t \in \arg\max_{x \in \mathcal{X}} \
\langle m, M \mu_{t-1}(x)\rangle_{\mathcal{M}}
+ \beta_{t-1}\|m\|_{\mathcal{M}}\|K_{t-1}^{M}(x,x)\|_{\mathrm{op}}^{1/2}.
\]
This objective is generally non-convex in $x$. For low-dimensional problems, one can discretize the input space and perform grid search. For higher-dimensional settings, gradient-based optimization with multiple random initializations can be used.

\section{Numerical experiments}
\label{appendix_simulation_details}
\subsection{Experiment setup}
\label{appendix_setup}

\paragraph{Objective design.}
To enable a fair comparison across methods with different observation models, we construct the objective independently of the measurement operator. Let $f(x)\in\mathcal Y$ denote the underlying vector-valued output. We define the objective as $F(x)=\langle m,f(x)\rangle_{\mathcal Y}$, $m=\sum_{j=1}^q w_j\xi_j$, where $\Xi=[\xi_1,\dots,\xi_q]:\mathbb R^q\to\mathcal Y$ is a collection of linear functionals represented in $\mathcal Y$, and $w\in\mathbb R^q$ are the associated weights. We compactly write $m=\Xi w$. This formulation is convenient for evaluating vvBO under multiple observation models. In the experiments, we consider:
(i) full observation, where $M=I$, $\mathcal M=\mathcal Y$, and the full function $f(x)$ is observed; (ii) finite-dimensional observation, where $M=\Xi^*$, $\mathcal M=\mathbb R^q$, and only the projections $\Xi^*f(x)
=
[\langle \xi_1,f(x)\rangle_{\mathcal{Y}},\dots,\langle \xi_q,f(x)\rangle_{\mathcal{Y}}]^\top\in\mathbb{R}^q$ are available. Under the finite-dimensional observation model, $F(x)
=
\langle m,f(x)\rangle_{\mathcal Y}
=
\langle w,\Xi^*f(x)\rangle_{\mathbb R^q}$, which provides a unified formulation for benchmark methods with different measurement assumptions. In particular, methods operating on scalar, finite-dimensional, or function-valued measurements can all be compared while optimizing an equivalent objective derived from the same black-box operator $f$.

\paragraph{Implementation of baseline methods.} Here are the implementation details for each baseline method.
\begin{itemize}

\item \emph{Scalar-valued BO (BO).}
For vanilla Bayesian optimization, only the scalar objective value $F(x)$ is measured at each query.

\item \emph{Retrained scalar-valued BO (rBO).}
A variant of BO that is re-initialized whenever the objective changes, discarding previously collected data.

\item \emph{Multi-task BO (MTBO).}
Multi-task BO assumes access to vector-valued observations in a finite-dimensional space \cite{chowdhury2021no}. In our experiments, for each $x\in\mathcal X$, MTBO measures
\[
\Xi^*f(x)\in\mathbb R^q,
\]
and constructs a surrogate model mapping from $x\in\mathcal{X}$ to $\Xi^*f(x)\in\mathbb{R}^q$. The objective is then evaluated as
\[
F(x)=\langle w,\Xi^*f(x)\rangle_{\mathbb R^q}.
\]

For vvBO, when $M=\Xi^*$, vvBO and MTBO operate on the same finite-dimensional observations. In contrast, under full observations ($M=I$), vvBO directly models the full output $f(x)$, while MTBO remains restricted to the projected measurements $\Xi^*f(x)\in\mathbb{R}^q$.

\item \emph{Retrained multi-task BO (rMTBO).}
A variant of MTBO that is re-initialized whenever the objective changes, discarding previously collected data. Since the objective changes through the weights $w$, rMTBO updates the scalarized objective accordingly while reconstructing the model from scratch using the newly collected observations.

\item \emph{Contextual BO (CTBO).}
The objective $F(x)=\langle m,f(x)\rangle_{\mathcal M}$ can equivalently be written as a scalar-valued function $\tilde F:\mathcal X\times\mathcal Y\to\mathbb R$ defined by
\[
\tilde F(x,m):=\langle m,f(x)\rangle_{\mathcal Y}.
\]
Thus, $F(x)=\tilde F(x,m)$ where $m$ acts as a context variable when the objective changes.

If $f$ belongs to a vector-valued RKHS with kernel $K$, then $\tilde F$ lies in a scalar-valued RKHS with kernel \cite{carmeli2006vector,micchelli2005learning}
\[
\tilde K\big((x_1,m_1),(x_2,m_2)\big)
=
\langle m_1,K(x_1,x_2)m_2\rangle_{\mathcal Y}.
\]

Given observations
\[
\tilde X_t=\{(x_i,m_i)\}_{i=1}^t,
\qquad
\tilde Y_t=\{\tilde F(x_i,m_i)\}_{i=1}^t,
\]
the standard scalar-valued GP regression can be applied with posterior mean and variance given by
\begin{align*}
\tilde\mu_t(x,m)
&=
\tilde K_{(x,m)\tilde X_t}
\big(
\tilde K_{\tilde X_t\tilde X_t}+\lambda I
\big)^{-1}
\tilde Y_t,
\\
\tilde\sigma_t^2(x,m)
&=
\tilde K((x,m),(x,m))
-
\tilde K_{(x,m)\tilde X_t}
\big(
\tilde K_{\tilde X_t\tilde X_t}+\lambda I
\big)^{-1}
\tilde K_{\tilde X_t(x,m)}.
\end{align*}

Common acquisition functions such as UCB, expected improvement, and Thompson sampling can then be applied \cite{garnett2023bayesian}. In this work, we use the UCB acquisition function.

\item \emph{Function-on-function BO (FFBO).}
FFBO is implemented following \cite{huang2026function}, corresponding to CTBO with a fixed context. The observations are scalar objective $F(x)$.

\end{itemize}
\paragraph{RKHS representation of vector-valued outputs.}
\label{appendix_output_representation}

In our experiments, the system output $f(x)$ is a function or a trajectory. For numerical implementation, we construct a finite-dimensional representation. Specifically, for each input $x$, the function $f(x)$ is evaluated on a dense grid $\{t_j\}_{j=1}^{50}$, yielding samples $y_j(x) = f(x)(t_j)$. We then approximate $f(x)$ using $\widehat{f(x)}$, which is in a scalar-valued RKHS and expressed as $\widehat{f(x)}(t) = \sum_{j=1}^{50} \alpha_{f(x),j}\, k(t, t_j)$, where $k$ is a scalar-valued kernel defined on the output domain. The coefficient $\alpha_{f(x)} = [\alpha_{f(x),1}, \dots, \alpha_{f(x),50}]^\top \in \mathbb{R}^{50}$ are coefficients obtained by solving a scalar-valued KRR problem
\[ 
\min_{\widehat{f(x)} \in \mathcal{H}_k} \sum_{j=1}^{50} \big(\widehat{f(x)}(t_j) - y_j(x)\big)^2 + 0.01 \|\widehat{f(x)}\|_{\mathcal{H}_k}^2. 
\]
Similarly, each linear functional $\xi$ is represented in the same basis as $\xi(\cdot) = \sum_{j=1}^{50} \alpha_{\xi,j}\, k(\cdot, t_j)$, with coefficients $\alpha_{\xi,j} \in \mathbb{R}$. Linear functionals of $f(x)$ are then evaluated as RKHS inner products in this finite-dimensional representation. All experiments use this finite representation of $f(x)$ and the functionals $\xi$, ensuring a consistent and fair comparison. This construction is introduced solely for numerical implementation and does not alter the theoretical results derived in this work.

\paragraph{Hyperparameters.}
\begin{table}[t]
\centering
\caption{Hyperparameters for different test operators.}
\label{tab:hyperparameters}
\begin{tabular}{lccccc}
\toprule
Test Operator & Kernel & $\lambda$ & Noise level & Input length scale(s) & Output length scale(s)\\
\midrule
GP       & RBF & $10^{-2}$ & $10^{-2}$ & $0.1$ & $0.1$ \\
GP (3D) & RBF & $10^{-2}$ & $10^{-2}$& $0.1$ & $0.1$ \\
Ackley & RBF & $10^{-2}$ & $10^{-2}$ & $3$ & $3$ \\
Eggholder        & RBF & $10^{-2}$ & $10^{0}$ & $50$ & $50$ \\
Bukin    & RBF & $10^{-2}$ & $10^{0}$ & $0.6$ & $1$ \\
Holder Table   & RBF & $10^{-2}$ & $10^{0}$ & $1$ & $1$ \\
Shubert   & RBF & $10^{-2}$ & $10^{-3}$ & $0.5$ & $0.5$ \\
Langermann   & RBF & $10^{-2}$ & $10^{-3}$ & $0.5$ & $0.5$ \\
Controller tuning   & RBF & $10^{-2}$ & $10^{-1}$ & $[0.6,0.8,0.8]$ & $[5000,10000]$ \\
\bottomrule
\end{tabular}
\end{table}

\begin{table}[t]
\centering
\caption{Computation time (ms) of vvBO across different iterations (mean $\pm$ std) for different test operators.}
\label{tab:computation_time_2d}
\begin{tabular}{lccc}
\toprule
Test Operator & Iteration 30 & Iteration 60 & Iteration 100 \\
\midrule
GP           & $8.16 \pm 0.04$ & $8.59 \pm 0.10$ & $9.22 \pm 0.17$ \\
Ackley       & $10.08 \pm 0.22$ & $10.62 \pm 0.23$ & $11.42 \pm 0.19$ \\
Eggholder    & $8.13 \pm 0.19$ & $8.53 \pm 0.18$ & $9.20 \pm 0.20$ \\
Bukin        & $8.24 \pm 0.15$ & $8.56 \pm 0.18$ & $9.39 \pm 0.18$ \\
Holder Table & $10.13 \pm 0.25$ & $10.60 \pm 0.18$ & $11.39 \pm 0.27$ \\
Shubert      & $7.98 \pm 0.17$ & $8.39 \pm 0.15$ & $9.22 \pm 0.26$ \\
Langermann   & $8.06 \pm 0.16$ & $8.42 \pm 0.09$ & $9.11 \pm 0.19$ \\
\bottomrule
\end{tabular}
\end{table}
All simulations are conducted on HP OMEN 45L Desktop (Intel Core Ultra 9 285K, 24 cores up to 6.5~GHz), 62~GB RAM, running Ubuntu 24.04.3~LTS (Linux 6.14). The hyperparameters for different test operators are summarized in Table \ref{tab:hyperparameters} (identical across different baseline methods). The kernel used for vvBO is composed of an RBF kernel and the identity operator. The computation times for those test operators at different iterations for vvBO are shown in Table \ref{tab:computation_time_2d}.

\subsection{Synthetic problem}
\label{appendix_synthetic}

\subsubsection{Test operators}

We evaluate the proposed method on synthetic problems constructed from standard optimization benchmarks. For each benchmark, the input is decomposed as $u=(x,t)$, where $x$ denotes the input variable and $t$ indexes the structured output. The latent vector-valued function is generated as
\[
f(x)(t) = h(x,t),
\]
where $h$ is one of the following ground-truth functions. The scalar objective is then obtained by applying a linear functional to the trajectory $f(x)$.

\paragraph{Gaussian process.}
For the GP benchmark, we generate a smooth function using an RBF kernel expansion. Let $X_{N_x} = \{x_i\}_{i=1}^{N_x}$ and $T_{N_t} = \{t_j\}_{j=1}^{10}$ denote fixed grids on $[0,1]$. $N_x$ changes with grid density. For one-dimensional inputs, $N_x = 10$. For three-dimensional inputs, $N_x = 125$. Let $\alpha\in \mathbb{R}^{N_x\times N_t}$ be a matrix with randomly sampled elements, i.e., $\alpha_{ij}$ is sampled uniformly from $[-3.5,3.5]$. The ground truth is $h(x,t)
=G_{xX_{N_x}}\alpha G_{T_{N_t}t}$, where $G$ is an RBF kernel with length scale $\ell=0.1$. We implement both a one-dimensional version, denoted GP, and a higher-dimensional version, denoted GP (3D), where $x\in\mathbb{R}^3$.

\paragraph{Ackley.}
For $u\in\mathbb{R}^d$, the Ackley benchmark is
\[
h(u)
=
-\left(
-20\exp\!\left(-0.2\sqrt{\frac{1}{d}\sum_{i=1}^d u_i^2}\right)
-\exp\!\left(\frac{1}{d}\sum_{i=1}^d \cos(0.2\pi u_i)\right)
+20+e-20
\right).
\]
The domain is $u_i\in[-32.768,32.768]$.

\paragraph{Eggholder.}
For $u=(u_1,u_2)$,
\[
h(u)
=
-\left(\frac{u_2}{2}+47\right)
\sin\!\left(
\sqrt{\frac{1}{2}\left|\frac{u_2}{2}+\frac{u_1}{4}+47\right|}
\right)
-\frac{u_1}{2}
\sin\!\left(
\sqrt{\frac{1}{2}\left|\frac{u_1}{2}-\left(\frac{u_2}{2}+47\right)\right|}
\right),
\]
with domain $u_i\in[-512,512]$.

\paragraph{Bukin.}
For $u=(u_1,u_2)$,
\[
h(u)
=
-100\sqrt{\left|u_2-0.01u_1^2\right|}
+0.01|u_1+10|
+180,
\]
with $u_1\in[-15,-5]$ and $u_2\in[-3,3]$.

\paragraph{Holder Table.}
For $u=(u_1,u_2)$,
\[
h(u)
=
\left|
\sin(u_1)\cos(u_2)
\exp\!\left(
\left|1-\frac{\sqrt{u_1^2+u_2^2}}{\pi}\right|
\right)
\right|,
\]
with $u_i\in[-10,10]$.

\paragraph{Shubert.}
For $u=(u_1,u_2)$,
\[
h(u)
=
\frac{1}{100}
\left[
\sum_{i=1}^{5} i\cos\!\left((i+1)\frac{u_1}{2}+i\right)
\right]
\left[
\sum_{i=1}^{5} i\cos\!\left((i+1)\frac{u_2}{2}+i\right)
\right],
\]
with $u_i\in[-10,10]$.

\paragraph{Langermann.}
For $u=(u_1,u_2)$,
\[
h(u)
=
\sum_{i=1}^{5}
c_i
\exp\!\left(
-\frac{\|u/2-A_i\|_2^2}{\pi}
\right)
\cos\!\left(\pi \|u/2-A_i\|_2^2\right),
\]
where
\[
c=(1,2,5,2,3),
\qquad
A=
\begin{pmatrix}
3 & 5\\
5 & 2\\
2 & 1\\
1 & 4\\
7 & 9
\end{pmatrix},
\]
with $u_i\in[0,10]$.

\subsubsection{Implementation details}

We consider three phases with changing objectives. Each objective is defined as $F_i(x)=\langle m_i,f(x)\rangle_{\mathcal Y}$, $m_i=\Xi_i w_i$, where $\Xi_i=[\xi_{i,1},\dots,\xi_{i,5}]:\mathbb R^5\to\mathcal Y$ collects a set of base functionals and $w_i\in\mathbb R^5$ are the associated weights. From phase one to two, only the weights $w_i$ change while the functional basis $\Xi_i$ remains fixed. From phase two to three, both $\Xi_i$ and $w_i$ are modified. For vvBO, we consider two observation regimes. Under full observations ($M=I$), the full output $f(x)$ is observed and all three phases are evaluated. Under partial observations ($M=\Xi^*$), only projected measurements $\Xi^*f(x)\in\mathbb R^5$ are available. In this case, $F_i(x)
=
\langle m_i,f(x)\rangle_{\mathcal Y}
=
\langle w_i,\Xi_i^*f(x)\rangle_{\mathbb R^5}$. Since the measurement operator is fixed to $\Xi^*$, only the transition from phase one to two is considered under partial observations, while phase three is omitted because changing $\Xi_i$ would also modify the measurement operator.

The corresponding functional bases $\Xi_i$ and weights $w_i$ for each test operator are summarized in Table~\ref{tab:measurements}. Here, $\xi_{\mathrm{int}}$ denotes a randomly generated integral functional, while $\xi(t)$ denotes a point-evaluation functional at $t$. Table~\ref{tab:synthetic_measurements} summarizes the inputs and measurements available to each method across phases. In particular, vvBO with full observations ($M=I$) directly models the full output $f(x)$, whereas vvBO with partial observations ($M=\Xi^*$) only accesses the projected measurements $\Xi^*f(x)\in\mathbb{R}^5$. MTBO and rMTBO operate on finite-dimensional observations constructed from $\Xi_i^*f(x)\in\mathbb{R}^5$, while BO and rBO only observe the scalar objective values. CTBO treats $m_i$ as a context variable and models the scalar-valued mapping $(x, m_i)\mapsto \tilde{F}(x, m_i)$. FFBO corresponds to CTBO with fixed context and only observes scalar objective values.

\begin{table}[t]
\centering
\caption{
Functional basis $\Xi_i$ and weights $w_i$ defining the objectives
$F_i(x)=\langle \Xi_i w_i, f(x)\rangle_{\mathcal Y}$ across phases.
From phase one to two, only the weights $w_i$ change, while from phase two to three both $\Xi_i$ and $w_i$ are modified.}
\label{tab:measurements}
\begin{tabular}{llccc}
\toprule
Test Operator & Phase & Functional basis $\Xi_i$ & $w_i \in \mathbb{R}^5$ & $\beta$ \\
\midrule

\multirow{3}{*}{GP, GP(3D)}
& one & $[\xi(0),\xi(0.1),\xi(0.2),\xi(0.3),\xi(0.4)]$ & $\frac{1}{5}[1,1,1,1,1]$ & 6 \\
& two & same as phase one & $[0,1,0,0,0]$ & 6 \\
& three & $[\xi(0.5),\xi(0.6),\xi(0.7),\xi(0.8),\xi(0.9)]$ & $\frac{1}{5}[1,1,1,1,1]$ & 6 \\

\midrule

\multirow{3}{*}{Ackley}
& one & $[\xi_{\mathrm{int}}^{(1)},\dots,\xi_{\mathrm{int}}^{(5)}]$ & $\frac{1}{5}[1,1,1,1,1]$ & 10 \\
& two & same as phase one & $\frac{1}{4}[1,0,1,1,1]$ & 40 \\
& three & $[\xi_{\mathrm{int}}^{(1)},\dots,\xi_{\mathrm{int}}^{(5)}]$ & $\frac{1}{5}[1,1,1,1,1]$ & 70 \\

\midrule

\multirow{3}{*}{Bukin}
& one & $[\xi(0),\xi(0.5),\xi(1),\xi(1.5),\xi(2)]$ & $\frac{1}{5}[1,1,1,1,1]$ & 100 \\
& two & same as phase one & $[0,0,0,0,1]$ & 115 \\
& three & $[\xi(0),\xi(-0.5),\xi(-1),\xi(-1.5),\xi(-2)]$ & $\frac{1}{5}[1,1,1,1,1]$ & 80 \\

\midrule

\multirow{3}{*}{Eggholder}
& one & $[\xi(500),\xi(400),\xi(300),\xi(200),\xi(100)]$ & $[1,0,0,0,0]$ & 400 \\
& two & same as phase one & $[0,0,1,0,0]$ & 250 \\
& three & $[\xi(0),\xi(-100),\xi(-200),\xi(-300),\xi(-400)]$ & $[0,0,0,0,1]$ & 300 \\

\midrule

\multirow{3}{*}{Holder Table}
& one & $[\xi_{\mathrm{int}}^{(1)},\dots,\xi_{\mathrm{int}}^{(5)}]$ & $\frac{1}{5}[1,1,1,1,1]$ & 30 \\
& two & same as phase one & $[1,0,0,0,0]$ & 30 \\
& three & $[\xi_{\mathrm{int}}^{(1)},\dots,\xi_{\mathrm{int}}^{(5)}]$ & $\frac{1}{5}[1,1,1,1,1]$ & 5 \\

\midrule

\multirow{3}{*}{Shubert}
& one & $[\xi(0),\xi(1),\xi(2),\xi(3),\xi(4)]$ & $\frac{1}{5}[1,1,1,1,1]$ & 0.5 \\
& two & same as phase one & $[0,0,0,1,0]$ & 0.5 \\
& three & $[\xi(0),\xi(-1),\xi(-2),\xi(-3),\xi(-4)]$ & $[0,0,0,0,1]$ & 1 \\

\midrule

\multirow{3}{*}{Langermann}
& one & $[\xi(5),\xi(6),\xi(7),\xi(8),\xi(9)]$ & $\frac{1}{5}[1,1,1,1,1]$ & 3 \\
& two & same as phase one & $[1,0,0,0,0]$ & 3 \\
& three & $[\xi(0),\xi(1),\xi(2),\xi(3),\xi(4)]$ & $[1,0,0,0,0]$ & 3 \\

\bottomrule
\end{tabular}
\end{table}

\begin{table*}[t]
\centering
\caption{
Inputs, measurements, and objective evaluation for different baseline methods in the synthetic experiments.
}
\label{tab:synthetic_measurements}
\small
\begin{tabular}{lllll}
\toprule
Method & Phase & Input & Measurement & Surrogate objective \\
\midrule

\multirow{3}{*}{vvBO ($M=I$)}
& one & $x$ & $f(x)$ 
& $\langle \Xi_1 w_1,f(x)\rangle_{\mathcal Y}$ \\

& two & $x$ & $f(x)$ 
& $\langle \Xi_2 w_2,f(x)\rangle_{\mathcal Y}$ \\

& three & $x$ & $f(x)$ 
& $\langle \Xi_3 w_3,f(x)\rangle_{\mathcal Y}$ \\

\midrule

\multirow{3}{*}{vvBO ($M=\Xi^*$)}
& one & $x$ & $\Xi_1^*f(x)$ 
& $\langle w_1,\Xi_1^*f(x)\rangle_{\mathbb R^5}$ \\

& two & $x$ & $\Xi_2^*f(x)$ 
& $\langle w_2,\Xi_2^*f(x)\rangle_{\mathbb R^5}$ \\

& three & -- & -- & -- \\

\midrule

\multirow{3}{*}{MTBO}
& one & $x$ & $\Xi_1^*f(x)$ 
& $\langle w_1,\Xi_1^*f(x)\rangle_{\mathbb R^5}$ \\

& two & $x$ & $\Xi_1^*f(x)$ 
& $\langle w_1,\Xi_1^*f(x)\rangle_{\mathbb R^5}$ \\

& three & $x$ & $\Xi_1^*f(x)$ 
& $\langle w_1,\Xi_1^*f(x)\rangle_{\mathbb R^5}$ \\

\midrule

\multirow{3}{*}{rMTBO}
& one & $x$ & $\Xi_1^*f(x)$ 
& $\langle w_1,\Xi_1^*f(x)\rangle_{\mathbb R^5}$ \\

& two & $x$ & $\Xi_2^*f(x)$ (re-initialized) 
& $\langle w_2,\Xi_2^*f(x)\rangle_{\mathbb R^5}$ \\

& three & $x$ & $\Xi_3^*f(x)$ (re-initialized) 
& $\langle w_3,\Xi_3^*f(x)\rangle_{\mathbb R^5}$ \\

\midrule

\multirow{3}{*}{BO}
& one & $x$ & $F_1(x)$ & $F_1(x)$ \\
& two & $x$ & $F_2(x)$ & $F_2(x)$ \\
& three & $x$ & $F_3(x)$ & $F_3(x)$ \\

\midrule

\multirow{3}{*}{rBO}
& one & $x$ & $F_1(x)$ & $F_1(x)$ \\
& two & $x$ & $F_2(x)$ (re-initialized) & $F_2(x)$ \\
& three & $x$ & $F_3(x)$ (re-initialized) & $F_3(x)$ \\

\midrule

\multirow{3}{*}{CTBO}
& one & $(x,\Xi_1 w_1)$ & $F_1(x)$ 
& $\tilde F(x,\Xi_1 w_1)$ \\

& two & $(x,\Xi_2 w_2)$ & $F_2(x)$ 
& $\tilde F(x,\Xi_2 w_2)$ \\

& three & $(x,\Xi_3 w_3)$ & $F_3(x)$ 
& $\tilde F(x,\Xi_3 w_3)$ \\

\midrule

\multirow{3}{*}{FFBO}
& one & $(x,\Xi_1 w_1)$ & $F_1(x)$ 
& $\tilde F(x,\Xi_1 w_1)$ \\

& two & $(x,\Xi_1 w_1)$ & $F_2(x)$ 
& $\tilde F(x,\Xi_1 w_1)$ \\

& three & $(x,\Xi_1 w_1)$ & $F_3(x)$ 
& $\tilde F(x,\Xi_1 w_1)$ \\

\bottomrule
\end{tabular}
\end{table*}

\subsubsection{Additional experiments}

\paragraph{Full observation.}
We report additional experiments under full observation ($M = I$), where the complete output $f(x)$ is available. The results are shown in Fig.~\ref{fig_appendix_other_full}. These results confirm that vvBO adjusts effectively across varying objectives and that the benefit of structured measurements persists across different benchmarks.

\paragraph{Partial observation.}
We further evaluate the proposed method under partial observation settings, where the measurement operator maps the output to a finite-dimensional space ($M = \Xi^*$). The results are shown in Fig.~\ref{fig:regret_partial} and Fig.~\ref{fig_appendix_other_partial}. Across all test operators, vvBO consistently outperforms the baseline methods in terms of both simple regret and cumulative regret, effectively accommodating to different objectives. In contrast, scalar-valued BO methods suffer from information loss due to observing only the objective value, while MTBO has limited performance and does not adapt to changes in the objective function. These results demonstrate that vvBO retains its advantage under different observation models.

\begin{figure}[t]
    \centering
    \includegraphics[width=\linewidth]{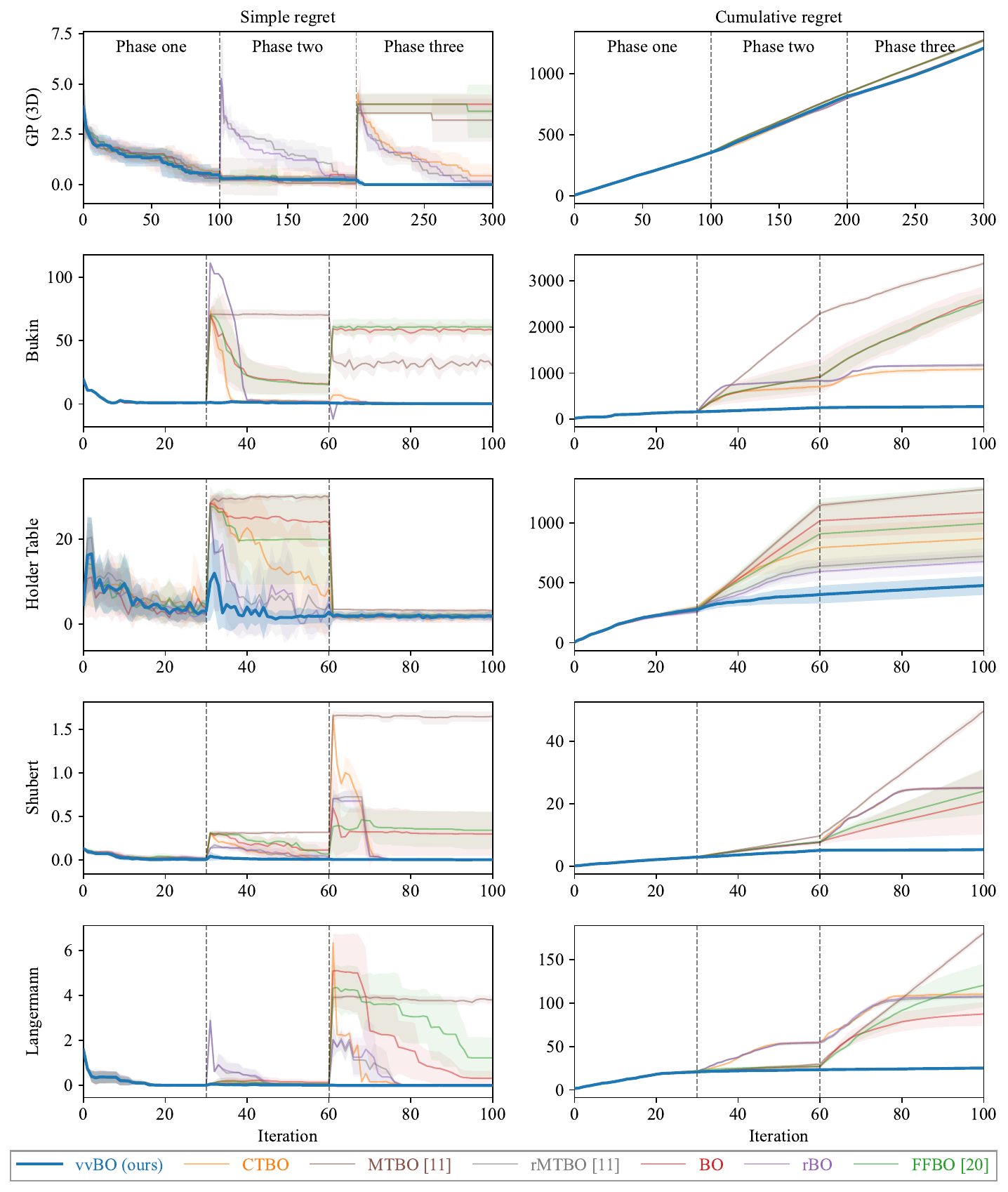}
    \caption{Comparison of simple regret and cumulative regret for additional test operators under full observations ($M = I$).}
    \label{fig_appendix_other_full}
\end{figure}

\begin{figure}[t]
    \centering
    \includegraphics[width=\linewidth]{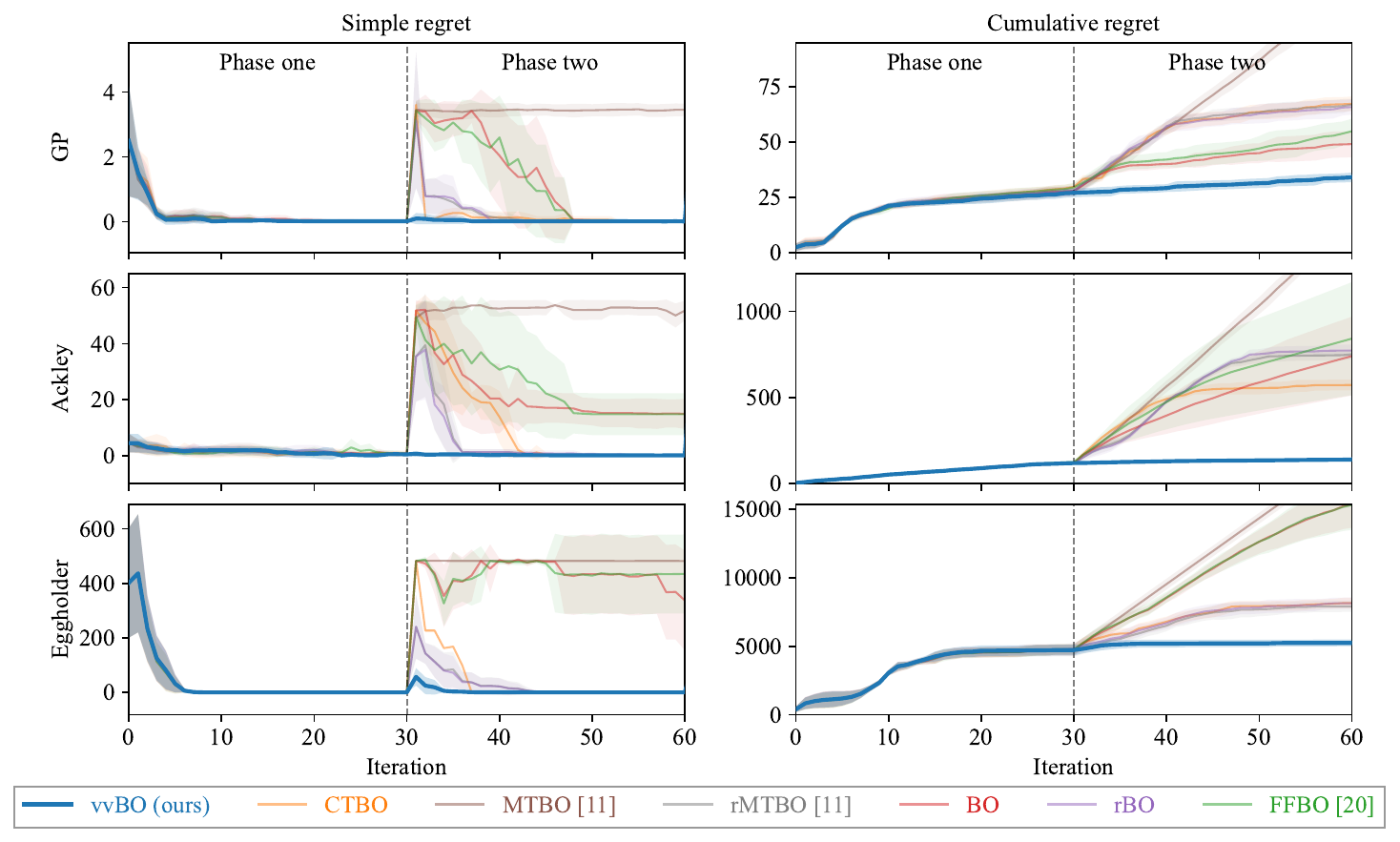}
    \caption{Comparison of simple regret and cumulative regret for three test operators under partial observations ($M = \Xi^*$).}
    \label{fig:regret_partial}
\end{figure}

\begin{figure}[t]
    \centering
    \includegraphics[width=\linewidth]{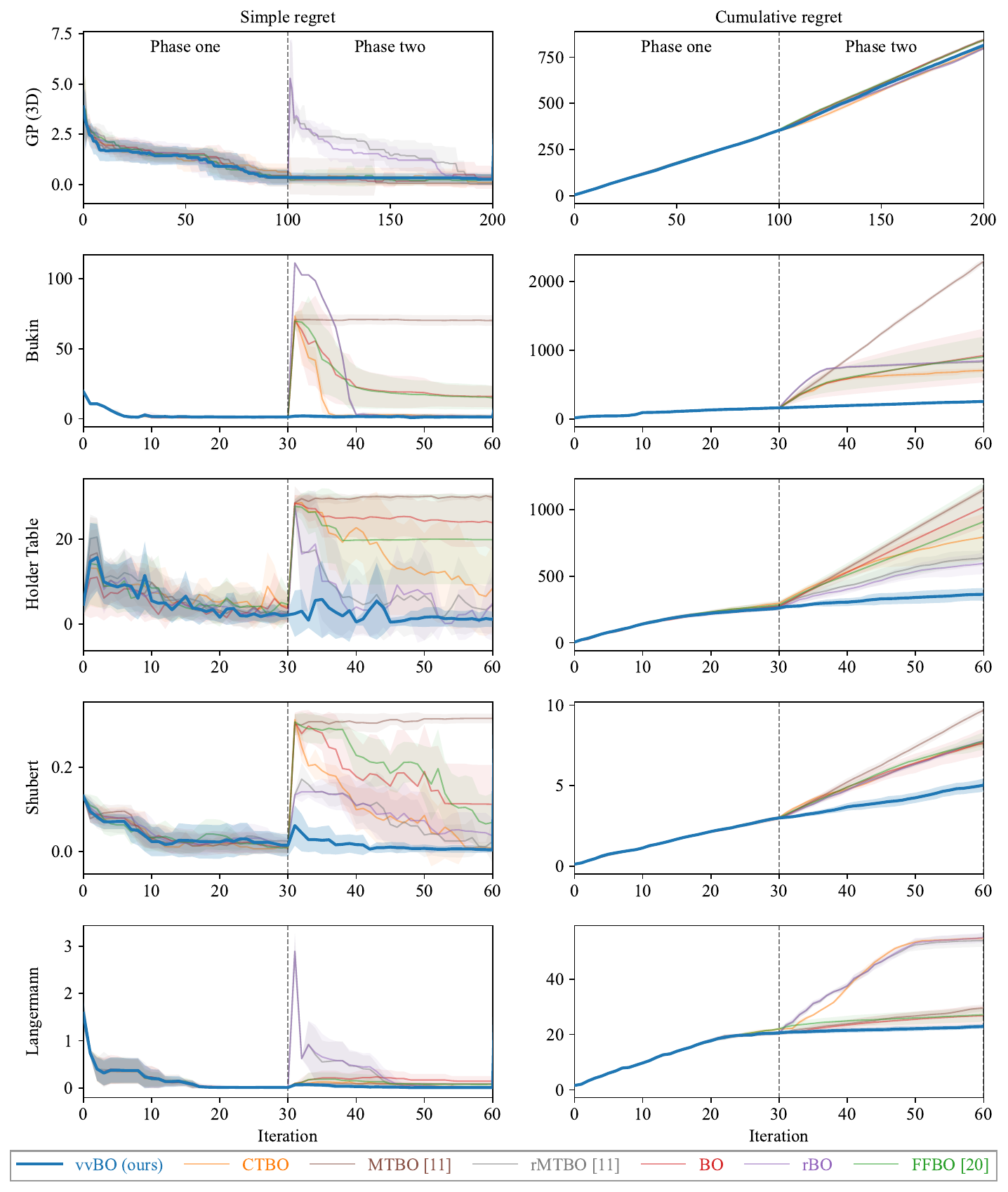}
    \caption{Comparison of simple regret and cumulative regret for additional test operators under partial observations ($M = \Xi^*$).}
    \label{fig_appendix_other_partial}
\end{figure}

\paragraph{Comparison between vvBO and CTBO.}

The performance of CTBO in tracking changing objectives depends critically on the variation of the context $m$. When $m$ shifts to a distant region, CTBO lacks sufficient samples in that region to construct an accurate model, resulting in more exploration. In contrast, when $m$ changes only slightly, CTBO can effectively reuse existing samples from the previous context to reduce uncertainty. This behavior is illustrated in Fig.~\ref{fig_appendix_regret}. From phase one to phase two, we first change the weight from $[1,0,0,0,0]$ to $[0,0,1,0,0]$, corresponding to optimizing only the third component $\xi_3$. This induces a large shift in the context, leading to a significant increase in cumulative regret due to the need for extensive exploration in a previously unseen region. In contrast, when the weight is changed from $[1,0,0,0,0]$ to $[0.8,0.2,0,0,0]$, the induced change is small. In this case, CTBO can leverage previously collected data more effectively, resulting in reduced exploration cost and lower cumulative regret. The corresponding uncertainty estimates for CTBO are shown in Fig.~\ref{fig_appendix_bound}. When transitioning to $[0,0,1,0,0]$, the model exhibits high uncertainty due to the lack of nearby samples. For the smaller change, the uncertainty remains well-controlled, enabling more efficient exploration. These results highlight the key limitation of CTBO. The performance is highly sensitive to the variation of the context. In contrast, vvBO models the underlying vector-valued operator directly and therefore maintains consistent performance regardless of changes in the objective functional.

\begin{figure}[t]
    \centering
    \includegraphics[width=\linewidth]{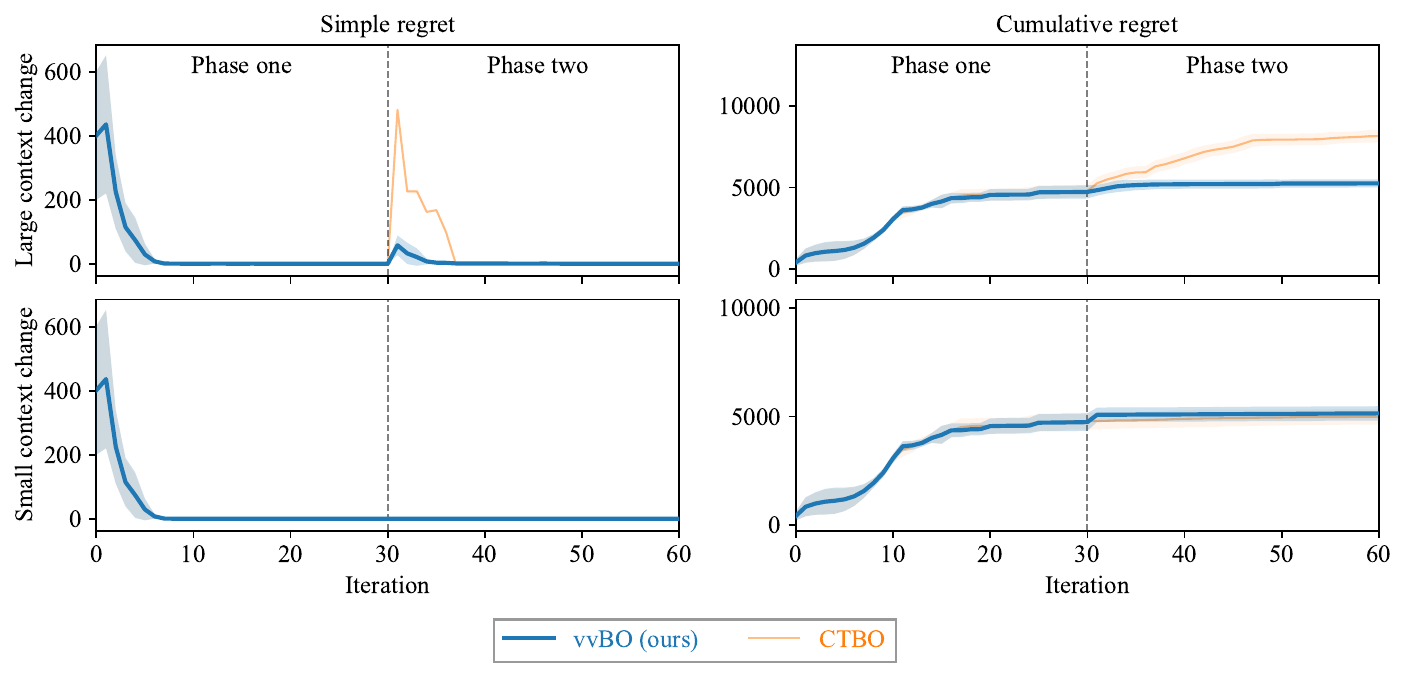}
    \caption{Comparison between vvBO and CTBO for the eggholder test operator for different contexts. When the change in the context is small, CTBO has a lower exploration cost.}
    \label{fig_appendix_regret}
\end{figure}

\begin{figure}[t]
    \centering
    \includegraphics[width=\linewidth]{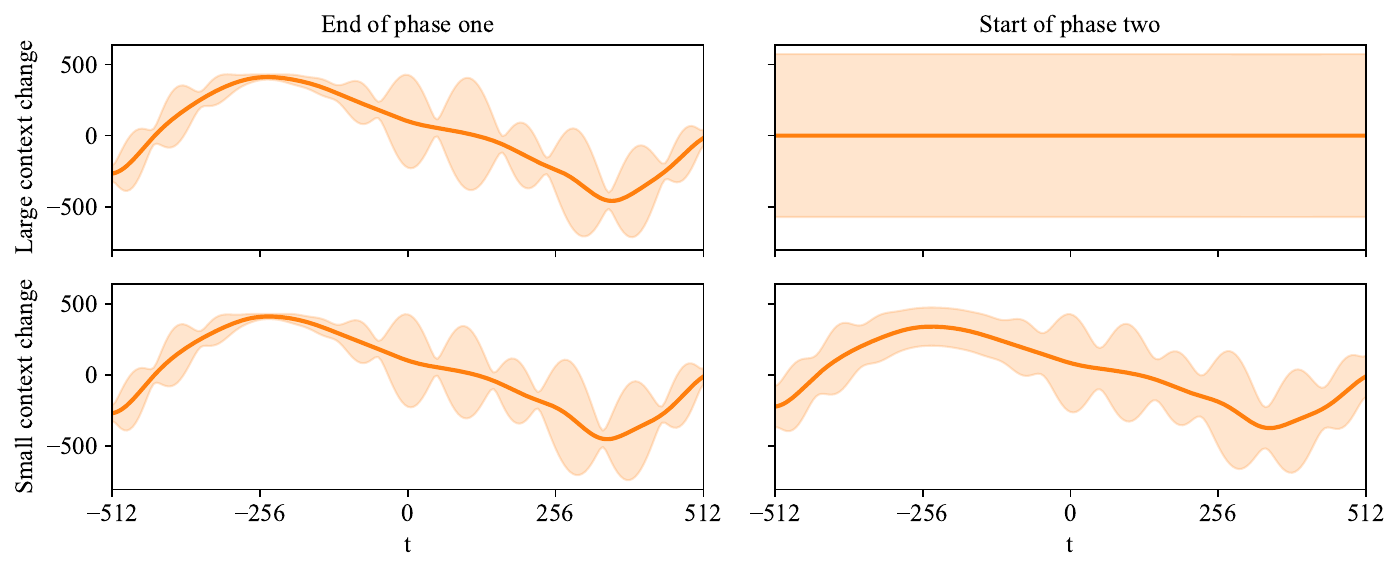}
    \caption{Comparison of the confidence bounds of CTBO for the eggholder test operator for different objectives. When the change in the context is small, CTBO can formulate a more informative confidence bound.}
    \label{fig_appendix_bound}
\end{figure}

\subsection{Real-world controller tuning}
\label{appendix_boptest}

We evaluate the proposed method on a real-world building control task using the Building Optimization Testing Framework (BOPTEST) \cite{blum2021building}, a high-fidelity simulation framework for benchmarking advanced HVAC control strategies. BOPTEST provides physics-based building emulators implemented in Modelica and exposed through standardized APIs, enabling realistic closed-loop evaluation under dynamic weather, occupancy, and pricing conditions \cite{blum2021building}. 

We consider the `\texttt{singlezone\_commercial\_hydronic}' test case, which models a commercial building zone served by a hydronic radiator connected to a district heating network. The building dynamics include thermal interactions between the indoor air, building envelope, external weather conditions, solar gains, and internal heat loads. Heating is delivered through a water-based radiator system, resulting in slow thermal dynamics and delayed control responses that make long-horizon optimization challenging. The controller regulates the radiator valve opening $u_1\in[0,1]$, which determines the heat supplied to the zone through the district heating loop. Further details on this test case can be found in \cite{yang2020implementation}.

\paragraph{Controller design.}
We employ a model predictive control (MPC) scheme to regulate the indoor temperature. The system dynamics are modeled using an AutoRegressive with eXogenous input (ARX) model:
\[
y_{k+1} = \bm{a}^\top \bm{y}_{k-10:k} + \sum_{i=1}^{3} \bm{b}_i^\top \bm{u}_{i,k-10:k},
\]
where $y$ is the indoor temperature, and $u_1, u_2, u_3$ correspond to the valve position, outdoor temperature, and solar irradiation. In this formulation, only the valve position is controlled. The outdoor temperature and solar irradiation are read directly from the BOPTEST API. The system model coefficients are obtained by solving a least squares problem with the data collected from a randomized bang-bang controller, i.e., the radiator is turned on when the temperature falls below a lower bound, and turned off when it exceeds an upper bound, and randomly actuated otherwise.

The optimization problem for the MPC controller is then formulated as
\begin{align*}
\min_{\bm{u}_1,\bm{\epsilon}} \quad 
& \sum_{k=0}^{N-1} 
\left(
u_{1,k}
+ \theta_2 \epsilon_k
+ \theta_3 (u_{1,k} - u_{1,k-1})^2
\right) \\
\text{s.t.} \quad
& y_{k+1}
= \bm{a}^\top \bm{y}_{k-10:k}
+ \sum_{i=1}^{3}
\bm{b}_i^\top \bm{u}_{i,k-10:k},
\\
& \theta_1 - \epsilon_k
\le y_{k+1}
\le 26 + \epsilon_k,
\qquad k = 0,\dots,N-1,
\\
& u_{1,k} \in [0,1],
\qquad
\epsilon_k \ge 0,
\qquad k = 0,\dots,N-1.
\end{align*}

where $N = 64$ is the horizon, $\epsilon_k$ is the slack variable to ensure feasibility. The MPC controller is parameterized by $\bm{\theta} = (\theta_1, \theta_2, \theta_3)$, corresponding to the comfort bound, slack penalty, and smoothness penalty. The controller operates with a sampling time of $900$\,s, meaning that the control input is updated every $15$ minutes. We evaluate the controller on a daily basis. For a fixed $\bm{\theta}$, the controller regulates the indoor temperature over a 24-hour window, producing the heating trajectory $Q(t)$ and the indoor temperature trajectory $y(t)$, where $t$ denotes the time of day in seconds.

\paragraph{Objective.}
The goal is to optimize the controller parameters by minimizing the daily cost
\[
\mathcal{J}_{\tau}(\bm{\theta}) 
= \int Q(t) p_{\text{heating},\tau}(t)\, dt 
+ 0.1 \int Q(t) p_{\text{CO}_2}(t)\, dt 
+ 100 \left( \frac{1}{|\mathcal{T}|} \int y(t)\, dt - 24 \right)^2,
\]
where $\tau$ denotes the day index. These three components, which are nonlinear black-box functions with respect to $\bm{\theta}$, capture energy cost, environmental impact, and thermal discomfort, respectively. Let $f(\bm{\theta}) = (Q(t), y(t))$ represent the vector-valued system output. The first two terms correspond to linear functionals of $f(\bm{\theta})$, while the third term is a nonlinear functional of the temperature trajectory. The heating price $p_{\text{heating},\tau}(t)$ varies across days following a sinusoidal pattern between $0$ and $0.232$ EUR/kWh, making the objective $\mathcal{J}_{\tau}(\bm{\theta})$ inherently time-varying.

\paragraph{Learning setup.}
The mapping from $\bm{\theta}$ to trajectories $Q(t)$ and $y(t)$ is generally unknown. In vvBO, we model such mapping using a vector-valued RKHS model with separable kernel, i.e., $K = G\otimes B$. Specifically, we choose the scalar kernel $G$ as a radial basis function kernel and set $B$ to be the identity operator on $\mathcal{Y}$. We assume the full trajectories $f(\bm{\theta})$ are available for measurement.  For numerical implementation, trajectories are discretized over time and represented using the scalar-valued KRR estimator as discussed in Appendix \ref{appendix_output_representation}. The optimization proceeds sequentially. At the end of each day, the measurements are added to the dataset, and the model is updated. Given the forecasted prices for the next day, the algorithm selects parameters $\bm{\theta}$ that minimize $\mathcal{J}_{\tau}(\bm{\theta})$.

\paragraph{Measurements for baseline methods.}
All methods optimize the same time-varying objective $\mathcal J_\tau(\bm\theta)$, but differ in the measurements available during learning. vvBO models the mapping from $\bm\theta$ to the full trajectories $f(\bm\theta)=(Q(t),y(t))$, and evaluates the objective through linear and nonlinear functionals of these trajectories. MTBO assumes access only to finite-dimensional task measurements, which correspond to energy consumption, environmental impact, and thermal discomfort. The surrogate objective is the weighted combination of these three measurements. BO directly observes the scalar objective value $\mathcal J_\tau(\bm\theta)$. CTBO and FFBO treat the price signals as contextual variables and learn scalar surrogate models for the objective. CTBO allows the context to vary across iterations, while FFBO assumes a fixed context during learning. Table~\ref{tab:boptest} summarizes the measurements available to each method.

\begin{table}[t]
\centering
\caption{Measurements available to each method.}
\label{tab:boptest}
\begin{tabular}{lcc}
\toprule
Method & Input & Measurements \\
\midrule

vvBO 
& $\bm{\theta}$ 
& $Q(t),\; y(t)$ \\

MTBO 
& $\bm{\theta}$ 
& energy cost, $\mathrm{CO}_2$ cost, and thermal discomfort \\

BO 
& $\bm{\theta}$ 
& $\mathcal{J}_{\tau}(\bm{\theta})$ \\

CTBO 
& $(\bm{\theta},\; p_{\text{heating},\tau}(t),\; p_{\text{CO}_2}(t))$ 
& $\mathcal{J}_{\tau}(\bm{\theta})$ \\

FFBO 
& $(\bm{\theta},\; p_{\text{heating},0}(t),\; p_{\text{CO}_2}(t))$ 
& $\mathcal{J}_{\tau}(\bm{\theta})$ \\

\bottomrule
\end{tabular}
\end{table}

\paragraph{Additional results.}

\begin{figure}[t]
\centering
\includegraphics[width=\linewidth]{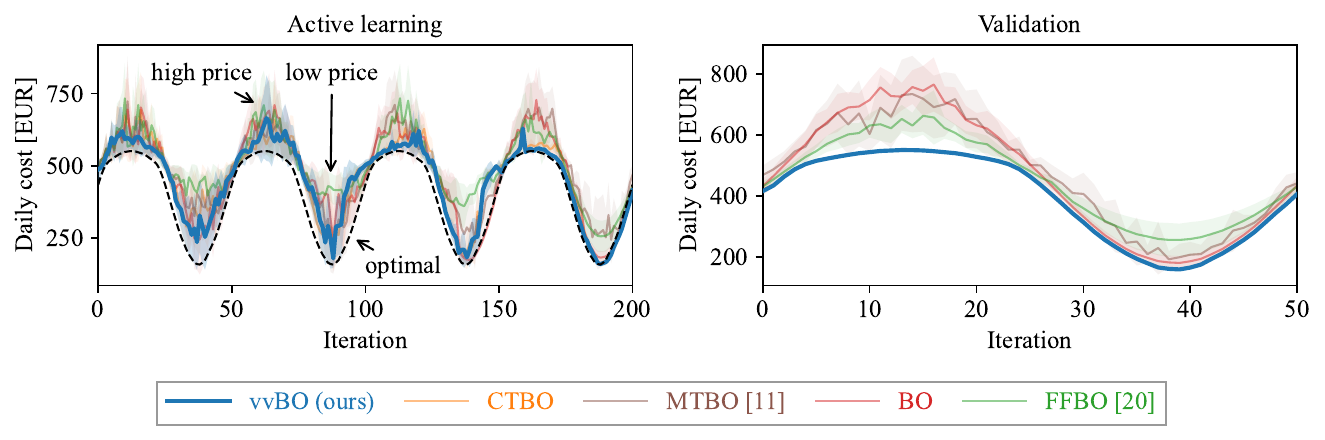}
\caption{Daily cost during learning and validation phases.}
\label{fig:boptest_appendix}
\end{figure}
Fig.~\ref{fig:boptest_appendix} shows the daily cost during both the learning and validation phases. In the first plot, we report the daily cost during the training phase. The black dashed line indicates the optimal cost for each day. After an initial exploration period, vvBO adjusts effectively to the time-varying objective (around iteration 150). In particular, when the heating price is high, vvBO selects parameters that reduce energy consumption by lowering the indoor temperature. Conversely, when the price is low, it increases the temperature to improve thermal comfort, thereby reducing the overall cost. In contrast, most baseline methods fail to track this optimal policy and exhibit significantly higher costs. The second plot shows the validation performance of the learned models. In this phase, no additional data are collected. Each method recommends parameters based on the trained model. vvBO consistently achieves the lowest cost across all days, demonstrating its ability to generalize and adjust to time-varying conditions. This highlights the advantage of leveraging structured measurements for efficient learning in real-world control problems.


\end{document}